\documentclass[11pt]{article}

\usepackage[utf8]{inputenc}
\usepackage[margin=1in, a4paper]{geometry}
\usepackage[dvipsnames]{xcolor}
\usepackage[T1]{fontenc}
\usepackage{hyperref}
\hypersetup{
	colorlinks=true,
	urlcolor=MidnightBlue,
	linkcolor=MidnightBlue,
	citecolor=MidnightBlue,
	unicode
}
\usepackage{url}
\usepackage{booktabs}
\usepackage{amsfonts}
\usepackage{nicefrac}
\usepackage{microtype} 
\usepackage{xcolor}
\usepackage{amsmath}
\usepackage{amsthm}
\usepackage{mathtools}
\usepackage[shortlabels]{enumitem}
\usepackage{xspace}
\usepackage{amssymb}
\usepackage{multicol}
\usepackage[capitalise,nameinlink,noabbrev]{cleveref}
\usepackage{tikz}
\usepackage{thmtools}
\usepackage{doi}
\usepackage{thm-restate}
\usepackage{wrapfig}

\newtheorem{theorem}{Theorem}

\newtheorem{lemma}{Lemma}

\newtheorem{corollary}{Corollary}
\newtheorem{definition}{Definition}
\newtheorem{assumption}{Assumption}
\newtheorem{remark}{Remark}

\begin{document}

\begin{center}

{\huge Seeing Through Distractions: Context-Stable Attribution via the Least Core}

\vspace{9mm}

\renewcommand{\arraystretch}{1.08}
\begin{tabular}{@{}c@{\hspace{2em}}c@{\hspace{2em}}c@{}}
{\large Sai Ganesh Nagarajan}\footnotemark &
{\large Toshinori Yamauchi} &
{\large Hiroshi Kera}
\\[0.6ex]
\small\slshape University of Southern Denmark &
\small\slshape Chiba University &
\small\slshape Chiba University
\\
\small\slshape IMADA &
\small\slshape Graduate School of Informatics &
\small\slshape National Institute of Informatics
\\[0.7ex]
\small\texttt{sgnagarajan@imada.sdu.dk} &
\small\texttt{t.yamauchi@chiba-u.jp} &
\small\texttt{kera@chiba-u.jp}
\end{tabular}

\vspace{5mm}

\end{center}

\footnotetext{Corresponding author.}

\begin{quote}
\noindent\small
{\bf Abstract.}~
Shapley-value-based explanations are widely used for feature
attribution in machine learning. However, in computer vision,
a feature may reduce the target-class score when added to
the complete genuine image, even when the predicted label
remains unchanged. We formalize such features as
\emph{contextual distractors} and investigate their
attributions under semivalues and the least core.
Under a weak average marginal monotonicity assumption,
we derive quantitative semivalue--least-core gap bounds
and sufficient conditions based on the probability of
sampling smaller contexts. We further show that shifting
semivalue weights toward larger contexts decreases the
signed gap. Finally, controlled image-classification experiments and evaluations on Salient ImageNet provide strong empirical support for our theoretical findings.
\end{quote}

\section{Introduction}
A central goal in explainable AI is to understand why a classifier makes a particular decision. For a fixed input, this often amounts to identifying a subset of features that best accounts for the prediction. In image classification, for instance, one may seek a small subset of pixels, patches, or segments whose presence suffices to preserve the predicted label under masking. The size constraint is important: without it, the full image would often be a trivial but uninformative explanation. This problem becomes especially challenging in computer vision, where explanations must scale to high-dimensional inputs and, at the same time, remain meaningful.

Game-theoretic methods have proved useful in this setting because they provide principled ways to reason about feature importance. In particular, the Shapley value \cite{shapley1953value} has become a standard attribution mechanism, due to its simplicity and its appealing axiomatic properties. However, two difficulties remain. The first, which is widely recognized, is computational: exact Shapley values require evaluating the characteristic function on all subsets of the \(N\) players. As a result, much of the literature has focused on approximation methods, especially sampling-based schemes \cite{strumbelj2010efficient,lundberg2017unified,ghorbani2019data,aas2021explaining,jethani2022fastshap}, as well as more efficient computation under structural assumptions on the characteristic function \cite{fatima2008linear,castro2009polynomial}. These guarantees have been further improved under assumptions such as submodularity \cite{balkanski2015mechanisms,balkanski2017statistical} and supermodularity \cite{liben2012computing}. However, such assumptions are unlikely to hold in natural settings, especially in computer vision, so these positive guarantees generally do not extend to the regimes of interest.

While the computational challenge of Shapley values is well understood, a second and more conceptual issue that is less studied, is the effect of averaging-based attributions in the presence of spurious contextual cues. In particular, a group of features may appear beneficial on average, even though it is harmful precisely in the contexts that are most informative for the prediction. In computer vision, for example, background patches, textures, or correlated artifacts can bias the classifier toward a label without capturing the actual object of interest. Furthermore, the characteristic function that is induced from such effects is unlikely to satisfy regularities such as submodularity/supermodularity.
See Figure~\ref{fig:bias_data} for an illustration of this effect. 

This phenomenon is closely related to several well-known \emph{shortcut} examples in computer vision. In \cite{ribeiro2016should}, background snow can spuriously drive a wolf-vs-husky classifier. In \cite{beery2018recognition}, animal classifiers are shown to rely on contextual cues such as water rather than the animal itself. The Waterbirds benchmark in \cite{liu2021just} makes this explicit by contrasting waterbird/landbird identity with background, so that background can dominate the prediction. Similarly, \cite{singh2020don} documents situations in which a correlated context, such as the presence of a person together with a skateboard, can dominate the classifier's reasoning. In \cite{lapuschkin2019unmasking}, explanation maps reveal that models sometimes rely on artifacts such as source tags rather than the object.
These examples suggest that a feature group can look useful on average, yet still be harmful in the important contexts one actually cares about. This is precisely the type of behavior that Shapley values may overlook through contextual averaging.  This perspective motivates moving beyond purely averaging-based attributions toward stability-based notions such as the \emph{least-core}~\cite{maschler1979geometric}, which focus on the coalitions that matter most rather than averaging over all contexts, which is precisely the direction we pursue in this work.
%To our knowledge, this effect has not been explicitly formalized as an averaging pathology of Shapley-style explanations.

\begin{figure}[t]
    \centering
    \includegraphics[width=0.5\linewidth]{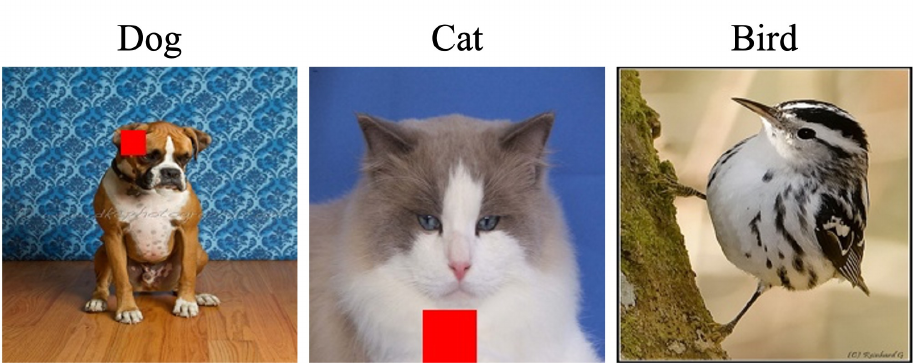}
    \caption{Controlled marker-bias setup. A red marker is added to all dog
images and a subset of cat images, while bird images remain
unmarked. This creates a dog-associated cue that may act as a
\emph{contextual distractor} for cat predictions. In
Section~\ref{sec:expts}, we observe that while Shapley by nature,
averages contributions across contexts, the least core
incorporates stability constraints that account for the
marker's effect on the complete genuine image and the predicted label.
}
    \label{fig:bias_data}
\end{figure}

\paragraph{Our Contributions.}
Our main contributions are as follows.
\begin{enumerate}

\item In Section~\ref{sec:CD}, we formalize the shortcut
phenomenon through \emph{contextual distractors}, feature
groups that reduce the target-class score when added to
the complete genuine image, even when the predicted
label remains unchanged.

\item We establish conditions under which the least core
captures this harmful effect and derive quantitative
semivalue--least-core gap bounds under a weak average
marginal monotonicity assumption. We further characterize
how the gap changes as semivalues place greater weight
on larger contexts.

\item Through controlled image-classification experiments and
evaluations on Salient ImageNet \cite{salient_imagenet}, we compare
contextual-distractor attributions across methods. We find that
Shapley and other attribution methods often assign greater
importance to distractors than the least core, even when the
genuine image content is sufficient for correct classification.
\end{enumerate}

An earlier workshop version of this work appeared at the
ICML 2026 NExT-Game Workshop~\cite{nagarajan2026seeing}.
The present paper substantially extends that preliminary version
by developing a general semivalue--least-core theory, deriving
new gap conditions and weight-shifting results under weaker
assumptions, and adding further experiments and evaluations,
including on Salient ImageNet.

\paragraph{Related Work.}
Our work is related to three strands of literature. First, Shapley-value-based explanations and, more broadly, semivalue methods such as Data Shapley, Beta Shapley, Data Banzhaf, and weighted Banzhaf have become standard tools for attribution in machine learning, including in computer vision through Shapley estimators tailored to vision transformers \cite{lundberg2017unified,sundararajan2020many,kwon2021beta,wang2023data,covert2023learning}.
Unlike this line of work, which remains averaging-based, we compare semivalue methods against a stable attribution method, which is the least-core.

Second, a growing literature studies limitations of Shapley-style explanations, including their dependence on the choice of value function, baseline values, occlusion strategy, or feature distribution, and the possibility of misleading feature-importance scores \cite{sundararajan2020many,janzing2020feature,aas2021explaining,huang2024failings,ren2022towards,blucher2022prediff,bluecher2024decoupling}.
Our work complements these critiques by identifying a specific mechanism behind such failures in vision-like settings, namely the presence of contextual distractors.

Third, our work is related to the literature on shortcut learning, spurious correlations, and explanation-based bias detection in classifiers \cite{adebayo2022post}. This work documents that models can rely on spurious cues and studies whether post hoc explanations can reveal such behavior. Our contribution is about formalizing this phenomenon through coalitional games and the notion of a contextual distractor. Furthermore, we show both theoretically and empirically when stability-based explanations can differ sharply from averaging-based ones.

Recent work has also modified Shapley-style attribution considering interactions \cite{zhang2020Interpreting,chang2025rethinking,sumiyasu2024identifying,blucher2022prediff}, 
but these approaches remain within the averaging paradigm are modified Shapley values, whereas our focus is on the least-core as an alternative principle.

Finally, with respect to the least core as a viable alternative
to Shapley-style explanations, works such as
\cite{yan2021if,gemp2024approximating} have developed computational
methods for approximating it in large games. In the setting of
contextual-distractor games, Kera et al.~\cite{kera2026rethinking}
study the complementary problem of how least-core attributions can
be computed and selected in practice. More recently,
Li et al.~\cite{li2026nonlinear} study attribution methods beyond
Shapley, including the egalitarian least core, which selects the
minimum-$\ell_2$-norm allocation from the least-core set, and
evaluate how well such methods identify important features.

Our contribution addresses a different question. We study when
context-dependent feature effects create a systematic separation
between Shapley/semivalue and least-core explanations in a broad
non-convex regime. In particular, our theoretical guarantees apply
to every allocation in the least-core set, rather than to a specific
least-core selector.

\section{Game Theory Preliminaries}
In this section, we introduce some game theory preliminaries, including the concept of the least-core and semivalues.
Consider a transferable-utility game $\mathcal{G}=(N,v)$, where $N$ denotes th set of players and $v: 2^N \rightarrow \mathbb{R}$ denotes the characteristic function or value function. A payoff vector is any vector \(x\in\mathbb{R}^{|N|}\), where \(x_i\) denotes the payoff assigned to player \(i\in N\). For any coalition \(S\subseteq N\), we write
\[
x(S):=\sum_{i\in S} x_i.
\]
When \(x(N)=v(N)\), the payoff vector is called \emph{efficient}.
See \cite{chalkiadakis2011computational} (Chapter 2) for more details.

The goal of an explanation method is precisely to distribute this total value among the features. In this view, each subset \(S\subseteq N\) defines a coalition, and its worth is given by a scalar coalition value \(v(S)\). Different explanation methods correspond to different solution concepts for performing this allocation/attribution.

\subsection{Semivalues}

Semivalues, introduced by Dubey, Neyman, and Weber
\cite{dubey1981value}, generalize the Shapley and Banzhaf values.
We consider symmetric semivalues, which assign equal weight to
coalitions of the same size, although the total weight may vary
across sizes.

\begin{theorem}[Representation of semivalues
\cite{dubey1981value}]
Let $n=|N|$. A symmetric semivalue admits a representation
\[
\phi^w(i;v)
=
\sum_{k=0}^{n-1}w_k
\frac{1}{\binom{n-1}{k}}
\sum_{\substack{S\subseteq N\setminus\{i\}\\|S|=k}}
\bigl(v(S\cup\{i\})-v(S)\bigr),
\]
where $w_k\ge 0$ and $\sum_{k=0}^{n-1}w_k=1$.
\end{theorem}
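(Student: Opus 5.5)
We prove the representation theorem of Dubey, Neyman, and Weber from the axioms that define a semivalue: linearity, symmetry (invariance under permutations of $N$), the null-player property, and positivity or monotonicity (monotone games receive nonnegative values). The normalization $\sum_k w_k=1$ corresponds to the usual requirement that the semivalue assign value $1$ to each player in a unanimity-type normalization game, for instance $\phi(i;u_{\{i\}})=1$. The plan is to reduce to a basis of the space of games, use symmetry to collapse the free parameters to one per coalition size, and then use positivity and normalization to pin down the sign and sum of those parameters.

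First, by linearity, $\phi(i;\cdot)$ is a linear functional on $\mathbb{R}^{2^N}$. So it can be written as $\phi(i;v)=\sum_{T\subseteq N} a_T^i\, v(T)$ for some coefficients $a_T^i$. Next, we use the null-player axiom to show that $\phi(i;v)$ depends only on the marginal contributions of $i$. Decompose $v=v_1+v_2$, where
\[
v_1(S)=v(S\setminus\{i\})
\]
makes $i$ a null player, and $v_2=v-v_1$ satisfies $v_2(S)=0$ whenever $i\notin S$. Then $\phi(i;v_1)=0$. Moreover $v_2$ is determined by the numbers $v(S\cup\{i\})-v(S)$ for $S\subseteq N\setminus\{i\}$. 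Hence
\[
\phi(i;v)=\sum_{S\subseteq N\setminus\{i\}} p^i_S\,\bigl(v(S\cup\{i\})-v(S)\bigr)
\]
for some coefficients $p^i_S$.

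Second, symmetry forces $p^i_S$ to depend only on $|S|=k$. To see this, apply a permutation $\pi$ fixing $i$ that maps $S$ to $S'$ with $|S|=|S'|$, and compare the coefficients of the transformed game. Write the common coefficient as $p_k=w_k/\binom{n-1}{k}$, which gives the displayed form. Symmetry across players also makes the $w_k$ independent of $i$.

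Third, for the sign and normalization, take the game $v^{S,i}$ with $v^{S,i}(T)=1$ if $T\supseteq S\cup\{i\}$ and $0$ otherwise. This game is monotone, so positivity gives $\phi(i;v^{S,i})\ge 0$. The main obstacle is extracting $p_k\ge0$ directly, because this game has nonzero marginal contributions for $i$ at every superset of $S$. I would handle it by Möbius inversion. The values $\phi(i;v^{S,i})=\sum_{R\supseteq S,\,i\notin R}p_{|R|}$ are nonnegative for every $S$, and these tail sums are related triangularly to the $p_k$, so one can invert. If positivity is only available in this tail-sum form, I would instead adopt the standard convention that the axiom is stated on the basis of games whose only nonzero marginal contribution occurs at a single coalition size. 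Alternatively, one can cite the probabilistic-value characterization of Weber directly, where $p_k\ge0$ is part of the axiom. Finally, evaluate at the game $v(T)=\mathbf{1}[i\in T]$. Here every marginal contribution of $i$ equals $1$, so $\phi(i;v)=\sum_k w_k$, and the normalization axiom yields $\sum_k w_k=1$.
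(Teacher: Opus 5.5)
The paper does not prove this statement; it cites Dubey, Neyman, and Weber. Your first two steps are correct. The splitting $v=v_1+v_2$ with $v_1(S)=v(S\setminus\{i\})$ gives $\phi(i;v)=\sum_{R\subseteq N\setminus\{i\}}p^i_R\bigl(v(R\cup\{i\})-v(R)\bigr)$. These coefficients are unique because $v\mapsto\bigl(v(R\cup\{i\})-v(R)\bigr)_R$ is onto, so your symmetry comparison of coefficients is legitimate. Evaluating at $v(T)=\mathbf{1}[i\in T]$ then gives $\sum_k w_k=1$.

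The gap is in the sign step. Your games $v^{S,i}$ are unanimity games, so positivity on them only says that the tail sums $t(S)=\sum_{R\supseteq S,\ i\notin R}p_{|R|}$ are nonnegative. Möbius inversion gives $p_{|S|}=\sum_{R\supseteq S,\ i\notin R}(-1)^{|R|-|S|}t(R)$, and the alternating signs mean nonnegativity does not carry over.

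This is a real obstruction, not a technicality. Take $N=\{i,a\}$ and the rule $\phi(j;v)=-v(\{j\})+2\bigl(v(N)-v(N\setminus\{j\})\bigr)$, i.e.\ $w_0=-1$, $w_1=2$. It is linear and symmetric, has the null-player property, satisfies $\sum_k w_k=1$, and is nonnegative on every unanimity game. Yet it gives $\phi(i;v)=-1$ on the monotone game $v(\{i\})=v(\{a\})=v(N)=1$. So positivity must be applied to monotone games outside the cone spanned by unanimity games. Adopting a new convention or citing Weber does not provide that; it assumes the conclusion.

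Your fallback points in the right direction, and no convention is needed because such games exist and are monotone. The threshold game $v_k(T)=\mathbf{1}[|T|\ge k+1]$ is monotone, and $v_k(R\cup\{i\})-v_k(R)=1$ exactly when $|R|=k$. Hence $\phi(i;v_k)=\binom{n-1}{k}p_k=w_k\ge 0$ by ordinary positivity. Weber's game $\mathbf{1}[T\supsetneq S]$ does the same for a single coalition $S$, even before symmetry is used. With either game replacing $v^{S,i}$, your argument is complete.
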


\paragraph{Shapley value.}
Setting $w_k=1/n$ for every $k=0,\ldots,n-1$
recovers the Shapley value:
\begin{equation}\label{eqn:shapval}
\phi^{\mathrm{Shap}}(i;v)
=
\frac{1}{n}
\sum_{k=0}^{n-1}
\frac{1}{\binom{n-1}{k}}
\sum_{\substack{S\subseteq N\setminus\{i\}\\|S|=k}}
\bigl(v(S\cup\{i\})-v(S)\bigr).
\end{equation}
Unlike general semivalues, the Shapley value satisfies
efficiency and is the unique efficient semivalue.
The Banzhaf value instead assigns equal weight to every
individual coalition, corresponding to
$w_k=\binom{n-1}{k}/2^{n-1}$.

\paragraph{Group semivalues.}
To attribute value to a group $D\subseteq N$, we treat it
as a single player \cite{shapley1953value,flores2019shapley}.
Let $m:=|N\setminus D|$. The resulting group semivalue is
\begin{equation}\label{eqn:grpsemival}
\phi_D^w(v)
=
\sum_{k=0}^{m}w_k
\frac{1}{\binom{m}{k}}
\sum_{\substack{T\subseteq N\setminus D\\|T|=k}}
\bigl(v(T\cup D)-v(T)\bigr),
\end{equation}
where $w_k\ge0$ and $\sum_{k=0}^{m}w_k=1$.

\paragraph{Sampling interpretation.}
The size weights $w=(w_0,\ldots,w_m)$ induce a probability
distribution over contexts $T\subseteq N\setminus D$:
\[
\mu_w(T)
=
\frac{w_{|T|}}{\binom{m}{|T|}}.
\]
Equivalently, one first samples the context size $k$
according to $w$, then samples uniformly among all
$k$-subsets of $N\setminus D$.
Writing $\Delta_D(T):=v(T\cup D)-v(T)$ and
suppressing the dependence on $v$, we obtain
\begin{equation}\label{eqn:expgrpsemival}
\phi_D^w
=
\mathbb{E}_{T\sim\mu_w}[\Delta_D(T)].
\end{equation}
This representation naturally suggests estimating semivalues
by sampling contexts and averaging their marginal contributions.
Approximation guarantees have also been studied under
additional structural assumptions, including submodularity
and related restricted settings
\cite{balkanski2017statistical,liben2012computing}.

\subsection{Least Core}

The core seeks a stable payoff allocation. For a payoff vector $x$, let
$x(S):=\sum_{i\in S}x_i$ denote the payoff assigned to coalition
$S\subseteq N$. An allocation is \emph{efficient} if $x(N)=v(N)$.
It is stable if no coalition $S$ can improve upon its allocation by
forming its own coalition, i.e., $x(S)\ge v(S)$. This gives the
definition of the core, attributed to Shapley and Gillies
\cite{shapley1955markets,gillies1959solutions}.

\begin{definition}\label{def:core}
The core of a game $\mathcal{G}=(N,v)$ is the set
\[
\mathcal{C}(\mathcal{G})
:=\{x\in\mathbb{R}^{|N|}:x(N)=v(N),\
x(S)\ge v(S)\ \forall S\subseteq N\}.
\]
\end{definition}

The core may be empty; see Example 2.22 in
\cite{chalkiadakis2011computational}. One remedy is to impose
additional structure. For supermodular (or convex) games, satisfying
$v(S\cup T)+v(S\cap T)\ge v(S)+v(T)$ for all $S,T\subseteq N$,
the core is nonempty and contains the Shapley value
\cite{Shapley1971}. Thus, the Shapley value is also stable in this
setting, although this guarantee does not extend to general games.
An alternative, introduced by \cite{maschler1979geometric}, is to
relax stability through the $\varepsilon$-core.

\begin{definition}\label{def:epscore}
For $\varepsilon\ge 0$, the $\varepsilon$-core of
$\mathcal{G}=(N,v)$ is
\[
\mathcal{C}_{\varepsilon}(\mathcal{G})
:=\{x\in\mathbb{R}^{|N|}:x(N)=v(N),\
x(S)\ge v(S)-\varepsilon\ \forall S\subseteq N\}.
\]
\end{definition}

Since the $\varepsilon$-core is nonempty for sufficiently large
$\varepsilon$, we seek the smallest relaxation admitting a feasible
allocation.

\begin{definition}\label{def:leastcore}
The \emph{least-core value} is
$\varepsilon^*(\mathcal{G})
:=\min\{\varepsilon\ge 0:
\mathcal{C}_{\varepsilon}(\mathcal{G})\ne\emptyset\}$.
The \emph{least core} is
$\mathcal{LC}(\mathcal{G})
:=\mathcal{C}_{\varepsilon^*(\mathcal{G})}(\mathcal{G})$.
\end{definition}

\paragraph{Computing the least core.}
The least-core value is obtained from the linear program
\begin{equation}\label{eqn:leastcore}
\min_{x,\varepsilon\ge 0}\ \varepsilon
\quad\text{s.t.}\quad
x(N)=v(N),\qquad
x(S)\ge v(S)-\varepsilon\quad\forall S\subseteq N.
\end{equation}
An allocation $x^*$ belongs to the least core precisely when
$(x^*,\varepsilon^*)$ is feasible for this program.

In practice, one can use approximate procedures that search for highly
violated coalitions. Recent works such as
\cite{yan2021if,gemp2024approximating} pursue this direction through
sampling-based and gradient-based schemes for approximately solving the
least-core problem. However, coalition sampling may itself affect the
resulting attribution: in contextual-distractor games,
Kera et al.~\cite{kera2026rethinking} observe that sampling can make
harmful features harder to identify, even when useful features remain
well identified.

A separate issue is that the least core is set-valued, so practical
applications require a selector to choose a particular allocation.
Kera et al.~\cite{kera2026rethinking} further show that this choice can
matter in practice and propose the \emph{entropic least core}, a smooth
selection procedure that approaches the nucleolus as its temperature
parameter decreases. Our theoretical results, however, hold for every
allocation in the least core and are therefore independent of the
choice of selector.

\section{Explanations in Classifications with Contextual Distractors}

Shapley values have been widely used in image classification to
visualize which pixels or image regions contribute most to a model's
output for a given target class.

Let $\mathcal{X}$ and $\mathcal{Y}=\{1,\ldots,K\}$ denote the image
and label spaces, respectively. An image classifier is a map
$f:\mathcal{X}\to\mathbb{R}^K$, where $f(x)$ is the vector of logits
for an image $x\in\mathcal{X}$. The predicted label is the class
with the largest logit.

To associate a cooperative game with a fixed image $x$, let $N$
denote its set of pixels (or features). For each $S\subseteq N$,
let $x_S$ denote the masked image that retains only the features
in $S$ and replaces the remaining features with a fixed baseline.
For a target class $y\in\mathcal{Y}$, define the induced value
function by
\[
    v_f^{x,y}(S):=[f(x_S)]_y.
\]
Thus, $v_f^{x,y}(S)$ is the logit corresponding to class $y$
and may be negative. The target class is fixed across all
coalitions, regardless of the predicted label of the masked image.

Importantly, the same feature may have different marginal
contributions depending on the target class being explained.
For notational simplicity, we fix $x$ and $y$ and write $v$
instead of $v_f^{x,y}$. The resulting cooperative game is
denoted by $\mathcal{G}=(N,v)$. Our theoretical results apply
to any fixed target class, although their assumptions and
the resulting attribution magnitudes may differ across classes.

\subsection{Contextual Distractors}\label{sec:CD}

In this section, we study how the contribution of a feature can depend
strongly on the surrounding image context. Importantly, this effect is
relative to the target class being explained: the same feature may
suppress the output of one class while supporting another. This is
particularly relevant for spurious features that become associated
with a class during training. Such a feature may act as a supportive
cue for the associated class, while becoming a contextual distractor
when explaining another class.

We focus on the latter phenomenon. Given a fixed target class, a
contextual distractor is a feature whose presence reduces the
corresponding model output in the complete genuine image context.
Its marginal effect may nevertheless be substantially different in
partial contexts, which creates a distinction between attribution
methods that average over contexts and core-based notions that are
constrained by particular coalitions.

\begin{definition}[Contextual distractor]
\label{def:contextual-distractor}
Fix a target class $y$ and let $D\subseteq N$ be a feature
or group of features, treated as a single player. Let
$C:=N\setminus D$ denote the remaining genuine image features.
We call $D$ a $\gamma$-contextual distractor, for $\gamma>0$, if
\[
    \Delta_D(C)
    :=v(C\cup D)-v(C)
    =v(N)-v(C)\le -\gamma.
\]
Thus, adding $D$ to the complete genuine image reduces the
target-class logit by at least $\gamma$.
\end{definition}

This definition concerns only the effect of $D$ on the complete
genuine image and imposes no restrictions on its contribution in
smaller contexts. In particular, a contextual distractor may
initially be helpful when little genuine image content is present,
but its contribution may decrease as more context is revealed,
eventually becoming harmful. More generally, its contribution may
remain negative throughout while becoming increasingly harmful.
We study this behavior through an average marginal monotonicity
assumption, which requires only that the average contribution of
$D$ is nonincreasing with context size. We observe this both in controlled experiments and on natural images from the Salient ImageNet dataset studied in Section~\ref{sec:expts}, particularly in Figures~ \ref{fig:marker_analysis} and \ref{fig:natural-image-results}, where we observe a \emph{near-linear} decrease.

\begin{assumption}[Average marginal monotonicity]
\label{ass:average-monotonicity}
Let $m=|C|$ and let $T$ be drawn uniformly at random
from $2^C$. For each $j=0,\ldots,m$, define
\[
r_j:=
\mathbb{E}\bigl[\Delta_D(T)\mid |T|=j\bigr]
=
\frac{1}{\binom{m}{j}}
\sum_{\substack{T\subseteq C\\|T|=j}}\Delta_D(T).
\]
Since $C$ is the only coalition of size $m$, we have
$r_m=\Delta_D(C)$. We assume that the average marginal
contribution of $D$ is nonincreasing with context size:
\[
r_0\ge r_1\ge\cdots\ge r_m=\Delta_D(C).
\]
\end{assumption}

We state our main theorem below and provide some explanations surrounding it. 
\begin{theorem}[Semivalue--least-core gap]
\label{thm:main}
Let $D$ be a $\gamma$-contextual distractor satisfying
average marginal monotonicity. Let $\phi_D^w$ be a
symmetric semivalue and let $J$ denote its sampled
context size, so that $\Pr(J=j)=w_j$. Define
$d_k:=r_{k-1}-r_k\ge0$ for $k=1,\ldots,m$.

Then every $x\in\mathcal{LC}$ satisfies
\begin{equation}\label{eqn:general-gap}
\phi_D^w-x(D)
\ge
\sum_{k=1}^m d_k\,\Pr_w(J<k)
-\varepsilon^\star.
\end{equation}
In particular, a gap of at least $\delta>0$ is
guaranteed if
\[
\sum_{k=1}^m d_k\,\Pr_w(J<k)
\ge\varepsilon^\star+\delta.
\]

Moreover, suppose that $r_k>\beta$ for some $k<m$, where $\beta+\gamma>0$. 
Consequently, a gap of at least $\delta>0$ is
guaranteed whenever
\[
\Pr_w(J\ge k+1)
\le
1-\frac{\varepsilon^\star+\delta}{\beta+\gamma}.
\]

\end{theorem}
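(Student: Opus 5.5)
The proof splits into three parts: an upper bound on $x(D)$ coming from least-core stability, an exact formula for the semivalue in terms of the $r_j$, and a final step combining them.

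\textbf{Step 1 (least-core bound on $x(D)$).} Take any $x\in\mathcal{LC}$. Efficiency gives $x(N)=v(N)$. The stability constraint for the coalition $C=N\setminus D$ gives $x(C)\ge v(C)-\varepsilon^\star$. Subtracting,
\[
x(D)=v(N)-x(C)\le v(N)-v(C)+\varepsilon^\star=r_m+\varepsilon^\star .
\]
This is the only place where least-core membership is used. It therefore holds for every allocation in $\mathcal{LC}$, with no selector needed.

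\textbf{Step 2 (semivalue as an average of the $r_j$).} By the group semivalue formula \eqref{eqn:grpsemival} and the definition of $r_j$, we have $\phi_D^w=\sum_{j=0}^m w_j r_j=\mathbb{E}[r_J]$. Telescoping gives $r_j=r_m+\sum_{k=j+1}^m d_k$. Substituting and exchanging the order of summation,
\[
\phi_D^w=r_m+\sum_{j=0}^m w_j\sum_{k>j}d_k=r_m+\sum_{k=1}^m d_k\Pr_w(J<k).
\]
Subtracting the bound of Step 1 yields \eqref{eqn:general-gap}. The sufficient condition for a gap of at least $\delta$ is then immediate.

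\textbf{Step 3 (the $\beta,\gamma$ condition).} Drop all terms with index $\ell\le k$. For $\ell\ge k+1$ we have $\Pr_w(J<\ell)\ge\Pr_w(J<k+1)=\Pr_w(J\le k)$. Hence
\[
\sum_{\ell}d_\ell\Pr_w(J<\ell)\ge \Pr_w(J\le k)\sum_{\ell=k+1}^m d_\ell=\Pr_w(J\le k)(r_k-r_m).
\]
By hypothesis $r_k>\beta$, and the distractor property gives $r_m=\Delta_D(C)\le-\gamma$, so $r_k-r_m>\beta+\gamma>0$. The gap is therefore at least $(1-\Pr_w(J\ge k+1))(\beta+\gamma)-\varepsilon^\star$.

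Under the stated condition, $1-\Pr_w(J\ge k+1)\ge(\varepsilon^\star+\delta)/(\beta+\gamma)$. This bound uses $\beta+\gamma>0$ to multiply through without flipping the inequality, and it makes the gap at least $\delta$.

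The main obstacle is the bookkeeping in Step 2: exchanging the order of summation so that the tail probabilities $\Pr_w(J<k)$ appear. A secondary point is noticing in Step 3 that both $\gamma$ (from the distractor definition, through $r_m$) and monotonicity enter only through the telescoped difference $r_k-r_m$.
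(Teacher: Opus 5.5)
Your proof is correct and follows essentially the same route as the paper: least-core stability for $C$ gives $x(D)\le r_m+\varepsilon^\star$, and $\phi_D^w=\sum_j w_j r_j$ is telescoped into $r_m+\sum_k d_k\Pr_w(J<k)$. The threshold condition then comes from bounding the excess over $r_m$ by $(r_k-r_m)\Pr_w(J\le k)$ with $r_k-r_m>\beta+\gamma$; you do this in the tail-sum form $\sum_\ell d_\ell\Pr_w(J<\ell)$ rather than the paper's $\sum_j w_j(r_j-r_m)$, which is the same estimate after exchanging summation, and you correctly use only $\beta+\gamma>0$, as the statement requires.
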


The proof of this theorem is given in Appendix~\ref{app:mainthmproof}. Here, we give explain the conditions and consequences in detail. Firstly, the sufficient condition captures the tension between the
averaging behavior of semivalues and the coalitional stability
of the least core. Suppose the distractor is helpful on average
in smaller contexts ($r_k>\beta>0$), but harmful in the
complete genuine context ($r_m\le-\gamma<0$).
A gap of at least $\delta>0$ is guaranteed whenever
\[
\Pr_w(J\ge k+1)
\le
1-\frac{\varepsilon^\star+\delta}{\beta+\gamma}.
\]
Thus, when a semivalue places sufficiently little weight on
larger contexts, it may overlook the distractor's harmful
effect in the presence of the genuine image features and
assign it greater attribution than the least core.
The condition becomes easier to satisfy as the contrast
$\beta+\gamma$ increases, and harder as the least-core
value $\varepsilon^\star$ increases.

\begin{remark}[Interpretation of the attribution gap]
\label{rem:gap-interpretation}
Consider a cat image that is correctly classified both with and
without a distractor $D$, but whose cat logit decreases when
$D$ is added. Ideally, the explanation should reflect the
contribution of the genuine cat features rather than assign
excessive credit to the distractor. 

A positive Shapley--least-core gap therefore
captures the greater relative attribution assigned to the
distractor by Shapley, consistent with our experimental
observations.
\end{remark}

The following lemma characterizes how the semivalue--least-core
gap changes with context weighting. Under average marginal
monotonicity, shifting weight toward larger contexts decreases
the signed gap.

\begin{lemma}[Effect of context weighting on the gap]
\label{lem:semivalue-weight-shift}
Suppose $D$ satisfies average marginal monotonicity,
i.e., $r_0\ge r_1\ge\cdots\ge r_m$.
Consider two symmetric semivalues with context-size
distributions $w$ and $w'$. Suppose $w'$ places
stochastically greater weight on larger contexts:
\[
\sum_{j=k}^m w'_j
\ge
\sum_{j=k}^m w_j,
\qquad k=1,\ldots,m.
\]
Define their signed gaps by
\[
G_w:=\phi_D^w-\max_{x\in\mathcal{LC}}x(D),
\qquad
G_{w'}:=\phi_D^{w'}-\max_{x\in\mathcal{LC}}x(D).
\]
Then
\begin{equation}\label{eqn:weight-shift}
G_w-G_{w'}
=
\sum_{k=1}^m
(r_{k-1}-r_k)
\sum_{j=0}^{k-1}(w_j-w'_j)
\ge 0.
\end{equation}
The inequality is strict if both factors in at least
one summand are strictly positive.
\end{lemma}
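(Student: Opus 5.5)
The argument is a direct computation followed by Abel summation. The least-core term $\max_{x\in\mathcal{LC}}x(D)$ does not depend on the semivalue weights, so it cancels in the difference. This gives
\[
G_w-G_{w'}=\phi_D^w-\phi_D^{w'}.
\]
No property of the least core is needed beyond the maximum being a well-defined finite number. The least core is a nonempty polytope, being the feasible set of \eqref{eqn:leastcore} at $\varepsilon^\star$, so the maximum of the linear functional $x(D)$ over it is attained.

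Next, express both semivalues through the size-averages $r_j$. From \eqref{eqn:grpsemival}, the inner normalized sum over $|T|=k$ is exactly $r_k$. Hence
\[
\phi_D^w=\sum_{j=0}^m w_j r_j,
\qquad
\phi_D^w-\phi_D^{w'}=\sum_{j=0}^m (w_j-w'_j)\,r_j.
\]

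Then apply summation by parts. Write $r_j=r_m+\sum_{k=j+1}^m (r_{k-1}-r_k)$. Both $w$ and $w'$ sum to one, so $\sum_j (w_j-w'_j)=0$ and the constant $r_m$ contributes nothing. Exchanging the order of summation over the pairs $j<k$ yields
\[
\sum_{k=1}^m (r_{k-1}-r_k)\sum_{j=0}^{k-1}(w_j-w'_j),
\]
which is \eqref{eqn:weight-shift}.

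For the sign, rewrite the inner sum using total mass one:
\[
\sum_{j=0}^{k-1}(w_j-w'_j)=\sum_{j=k}^m w'_j-\sum_{j=k}^m w_j\ge 0,
\]
where the inequality is the stochastic dominance hypothesis. Also $r_{k-1}-r_k\ge 0$ by Assumption~\ref{ass:average-monotonicity}. Every summand is therefore a product of two nonnegative factors, so the sum is nonnegative. If some summand has both factors strictly positive, that summand is strictly positive and the inequality is strict.

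The only delicate step is the Abel-summation index bookkeeping, in particular making sure the boundary term vanishes. Using the identity $\sum_j(w_j-w'_j)=0$ handles this cleanly.
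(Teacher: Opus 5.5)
Your proposal is correct and follows essentially the same route as the paper: cancel the common least-core term, write $\phi_D^w=\sum_j w_j r_j$, telescope $r_j=r_m+\sum_{k>j}(r_{k-1}-r_k)$ and exchange the sums, then rewrite the inner partial sums as tail differences using total mass one and combine this with $r_{k-1}-r_k\ge 0$. You also add two points the paper leaves implicit: that $\max_{x\in\mathcal{LC}}x(D)$ is attained because the least core is a nonempty bounded polytope, and an explicit argument for the strict case.
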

The proof of this lemma is stated in Appendix~\ref{app:lemmaweightshift}. Experimental evalutions shown in Appendix~\ref{sec:group-semivalue-experiment} confirm this pattern, in particular Figure~\ref{fig:group-semivalue-results}. We next discuss the limitations of semivalues that concentrate their weights on either very large or very small contexts.

\begin{remark}[Limitations of extreme context weighting]
Although placing greater weight on larger contexts reduces the
semivalue--least-core gap, this need not yield a more informative
explanation. A feature may genuinely support the target class
in smaller contexts, but become less helpful or harmful as more
features are revealed. Semivalues that concentrate excessively
on larger contexts may overlook this positive contribution,
just as those emphasizing smaller contexts may overlook its
harmful effect in the complete image. The least core, in
contrast, considers all coalitions through its stability
constraints rather than assigning fixed weights to different
context sizes.
\end{remark}

\paragraph{Specializations.}
Our results apply directly to Shapley, weighted Banzhaf,
and Beta-Shapley values. For Shapley, the single-threshold
condition in Theorem~\ref{thm:main} simplifies to

\[
\Pr_{\mathrm{Sh}}(J\ge k+1)
=\frac{m-k}{m+1}
\le 1-\frac{\varepsilon^\star+\delta}{\beta+\gamma}.
\]
Furthermore, Lemma~\ref{lem:semivalue-weight-shift}
shows that the signed gap is nonincreasing in the
weighted Banzhaf parameter $p$, and nondecreasing
in $a$ and nonincreasing in $b$ for Beta-Shapley.
The corresponding corollaries and proofs are given
in Appendix~\ref{app:addcor}.

\section{Experiments}\label{sec:expts}

We investigate contextual distractors in two settings:
controlled image-classification experiments and evaluations
on Salient ImageNet. We compare their effects on Shapley
and least-core attributions, with additional experiments
reported in Appendix~\ref{app:addexpts}.

\subsection{Controlled Marker-bias experiments}

First, we conduct a controlled marker-bias experiment to test whether the theoretical Shapley--least-core gap appears in image classification. 

\begin{figure*}[ht]
    \centering
    \includegraphics[width=\textwidth]{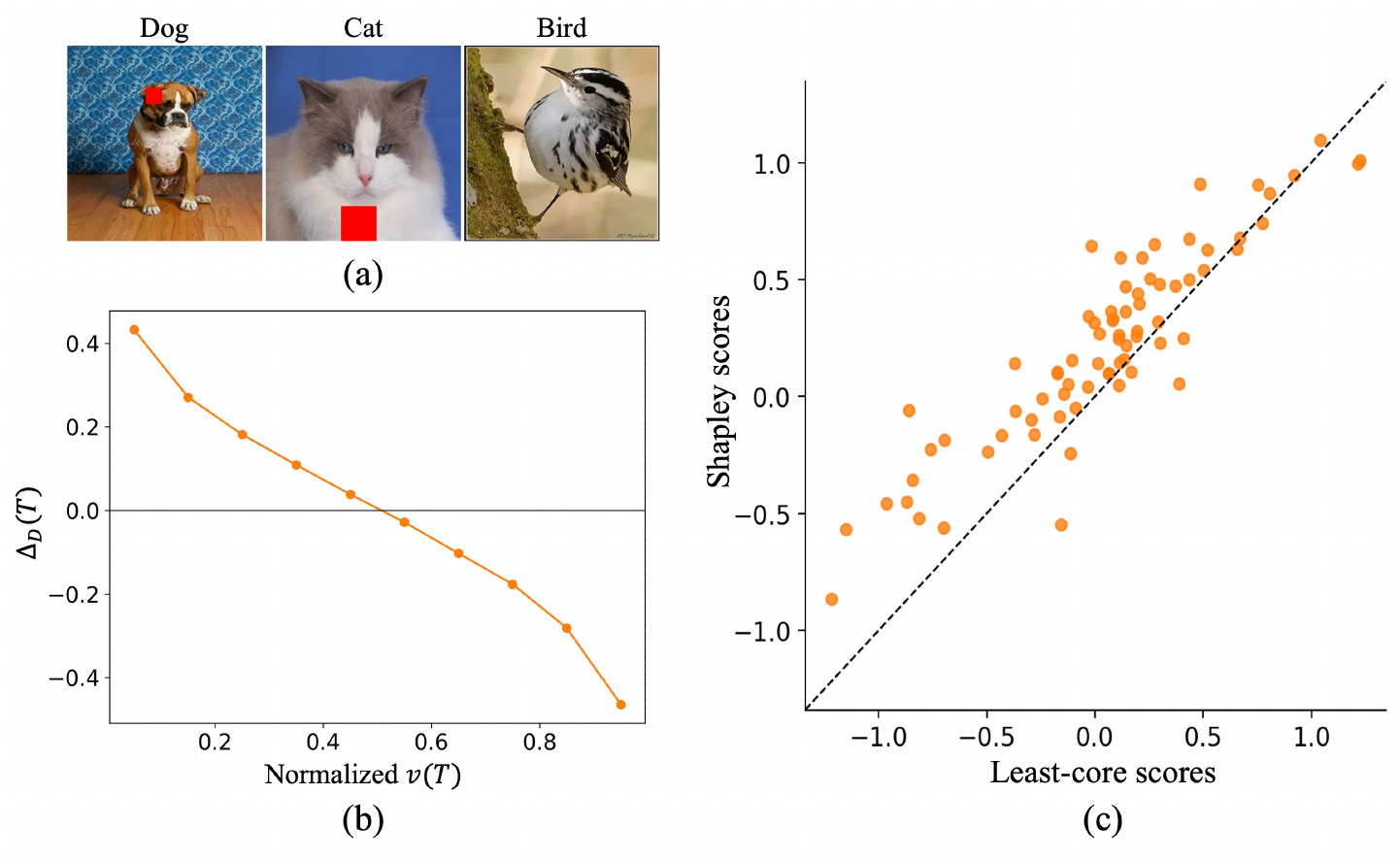}
    \caption{
    Analysis of the controlled marker-bias experiment.
    (a) Example images with synthetic markers.
    (b) For 100 cat-class images, the x-axis shows the normalized cat-class logit $v(T)$ of masked images, where coalitions $T$ are sampled from non-marker regions. The y-axis shows the logit change caused by adding the marker region $D$, i.e., $v(T \cup D) - v(T)$.
    (c) Comparison of Shapley and least-core scores for the marker region over the same 100 cat-class images. Each point corresponds to one image, and the dashed line indicates equal attribution by the two methods.
}
    \label{fig:marker_analysis}
\end{figure*}

We construct a three-class dog/cat/bird task and train a ResNet-50~\cite{resnet50} classifier. 
As shown in Fig.~\ref{fig:marker_analysis}(a), 
a red marker region $D$ is added to $100\%$ of dog images and $30\%$ of cat images, 
while bird images have no marker. 
This makes $D$ strongly associated with dog, 
so that it can act as a misleading contextual cue for cat images.

We first verify that $D$ behaves as a contextual distractor for cat predictions.
Each image is divided into a grid of patches, 
which define the player set $N$.
The marker region $D\subseteq N$ is represented by the set of patches overlapping the red marker.
We then sample coalitions $T\subseteq N\setminus D$ from the non-marker patches and compute $\Delta_D(T)=v(T\cup D)-v(T)$.
As shown in Fig.~\ref{fig:marker_analysis}(b), 
$\Delta_D(T)$ decreases as the context becomes stronger and changes from positive to negative.
Thus, the marker helps cat prediction in weak contexts but suppresses it when strong cat evidence is already present.

Finally, we compare the attribution assigned to the marker region.
Images ($224\times224$) are divided into an $8\times 8$ grid, yielding 64 patch players. 
Shapley values are estimated by Monte Carlo permutation sampling using 100 random permutations per image.
For each permutation, we sequentially add the 64 patches and record the change in the model output, resulting in $100 \times 64$ sampled output changes per image.
For least-core, we avoid enumerating all coalitions and instead solve the linear program on sampled constraints~\cite{yan2021if}. 
We sample 6,400 coalitions by independently including each player with probability $p=0.5$, and add the empty and grand coalitions.
Optimization is implemented with CVXPY. 
Neural-network evaluations are performed with PyTorch on CUDA, while the CVXPY optimization is typically run on CPU. 
Figure~\ref{fig:marker_analysis}(c) shows that Shapley assigns larger values to $D$ than the least-core for most images.
This supports our theoretical prediction that Shapley can over-credit a contextual distractor by averaging over contexts, 
whereas least-core penalizes features that destabilize high-value coalitions.

\subsection{Contextual Distractors in Natural Images}
\label{sec:natural-contextual-distractors}

We next test whether the same separation occurs for naturally occurring
regions rather than synthetic markers.  We use class--feature pairs from
Salient ImageNet~\cite{salient_imagenet} to identify candidate spurious
regions in ImageNet training images.  Each image is represented as a
$7\times7$ patch game, and the target-class logit is used as the coalition
value.  Candidate regions are independently localized using Grounding DINO~\cite{grounding_dino} and SAM~\cite{sam}, 
and are retained only when the resulting region $D$ satisfies the
contextual-distractor sign reversal: its mean marginal contribution is
positive in weak contexts and negative in strong contexts, with the expected
sign observed in at least $60\%$ of sampled contexts in both groups. 
We analyze 354 contextual distractors for ResNet-50 and 108 for ViT-B/16,
considering only images for which the model prediction matches the ground-truth label.
Full selection and computation details are given in
Appendix~\ref{app:natural-image-details}. Further evaluations are shown in Appendix~\ref{app:natural-contextual-distractors}.

For each candidate, we compare Monte Carlo Shapley with a sampled
egalitarian least-core allocation (\emph{Core}, see Appendix~\ref{app:natural-core-computation} for details), reporting total
distractor attribution $A_D=\sum_{i\in D}a_i$ and mask-weighted
attribution $A_{\mathrm{mask}}=\sum_{i=1}^{49}a_iw_i$, where $w_i$
is the fraction of patch $i$ covered by the final mask.
As controls, we generate 200 prediction-active random-SAM
regions per architecture, excluding those, where the contextual-distractors are harmful enough that they change the label.

\begin{figure*}[t]
    \centering
    \includegraphics[width=\textwidth]{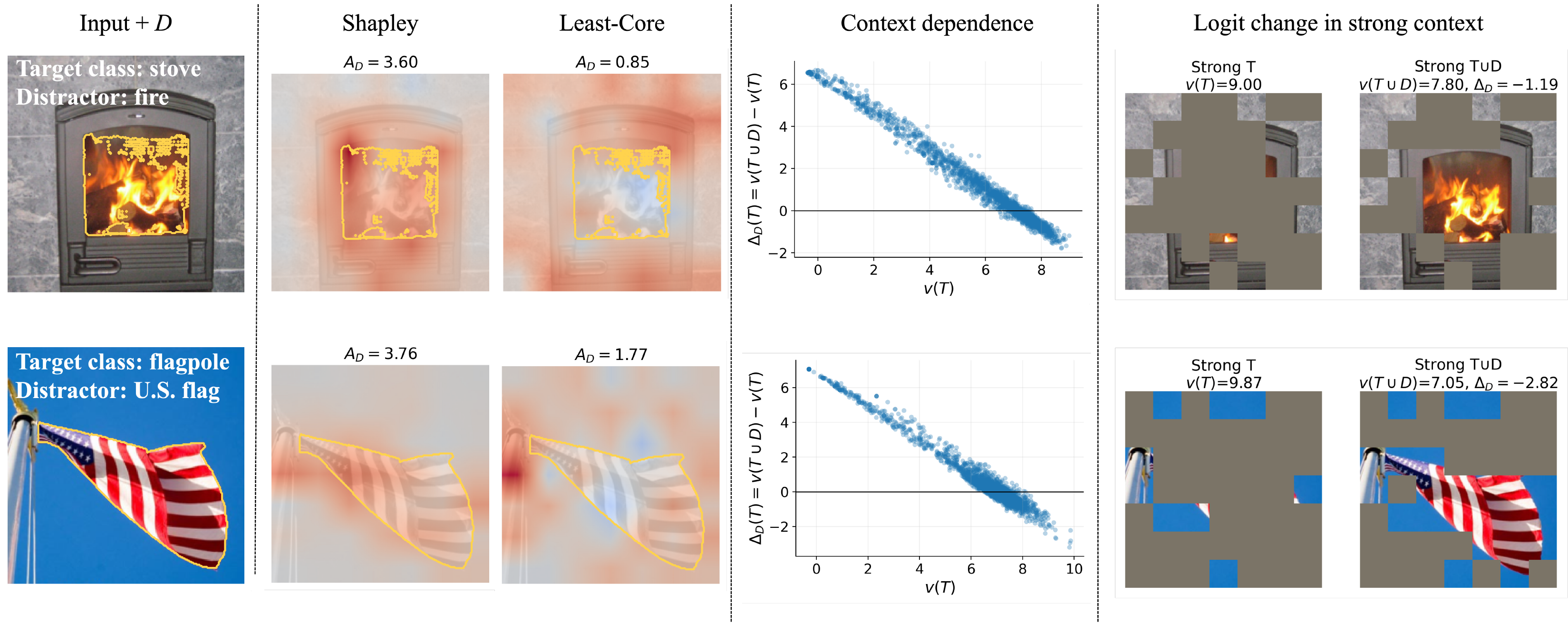}
    \caption{Representative examples of contextual distractors.
    \textbf{Input + D:} the original input, with the distractor region $D$
    outlined in yellow.  \textbf{Shapley and Least-Core:} patch-attribution
    heatmaps, where red and blue indicate positive and negative attribution,
    respectively; $A_D$ gives the total attribution assigned to the patches
    in $D$.  \textbf{Context dependence:} the marginal contribution
    $\Delta_D(T)=v(T\cup D)-v(T)$ plotted against the target-class logit
    $v(T)$ obtained from the surrounding context $T$.  \textbf{Logit change
    in strong context:} a sampled strong coalition $T$ before and after adding
    $D$, reporting the resulting target-class logit $v(T\cup D)$ and its
    change $\Delta_D$ from $v(T)$.}
    \label{fig:natural-image-results}
\end{figure*}

\paragraph{Qualitative evaluation.}
Figure~\ref{fig:natural-image-results} illustrates the same mechanism with two
natural-image examples.  In the stove image, the fire acts as the distractor
$D$, whereas in the flagpole image, the U.S. flag serves as $D$.
In both examples, the context-dependence plots show that
$\Delta_D(T)$ decreases as the target-class logit $v(T)$ of the surrounding
context increases, changing from positive in weak contexts to negative in
strong contexts.  Thus, the fire and the flag do not provide uniformly useful
class evidence; instead, their contributions depend on the surrounding
context, consistent with the behavior of contextual distractors.

The strong-context examples further illustrate this effect.
For the stove image, the surrounding context alone already yields a high
stove logit of $v(T)=9.00$.  Adding the fire decreases the logit to
$v(T\cup D)=7.80$, giving $\Delta_D=-1.19$.
This example shows that the fire is not necessary to sustain a high stove
score in this context; rather, other visual evidence, such as the stove frame,
is sufficient and remains informative.
Nevertheless, Shapley assigns a large attribution of $A_D=3.60$ to the fire,
whereas Core assigns only $0.85$ and places relatively greater emphasis on
the stove structure.
A similar pattern appears in the flagpole example.
The pole provides stable evidence for the target class, whereas adding the
U.S. flag to the displayed strong context decreases the flagpole logit from
$9.87$ to $7.05$, yielding $\Delta_D=-2.82$.
Despite this negative marginal contribution, Shapley assigns $A_D=3.76$ to
the flag, compared with $1.77$ under Core, which places relatively greater
emphasis on the pole.

These examples show how Shapley heatmaps may overemphasize context-dependent cues such as fire or flags, even in the presence of stronger contexts. The least core assigns them less credit, placing greater emphasis on evidence that remains useful.

\section{Limitations and Discussion}\label{sec:lim}

Our results highlight the least core as a complementary attribution
method, particularly for computer vision, where a large
semivalue--least-core gap may indicate shortcut features or
contextual distractors. Its main limitation is computational:
Eqn.~\eqref{eqn:leastcore} involves exponentially many coalition
constraints. Although the LP can be solved in polynomial time
given an efficient separation oracle via the ellipsoid method
\cite{grotschel2012geometric,fang2016computing}, such oracles
may be difficult to obtain in practice. Developing scalable
least-core algorithms, through application-specific structure
or heuristic separation oracles, remains an important direction
for future work.

We conclude with the perspective of Hart and Mas-Colell
\cite{hart1988potential}, who emphasize the value of studying
different cooperative solution concepts:
\begin{center}
\emph{A simpler solution may be easier to study and apply, which
increases its usefulness. However, it is important to look at
different solution concepts, based on different postulates,
because they illuminate the problem from different angles.}
\end{center}
In this spirit, the least core offers a complementary perspective
to the Shapley value by emphasizing coalitional stability
rather than average marginal contributions.

\section*{Acknowledgments}
This research was partially supported by JST PRESTO Grant Number JPMJPR24K4,
JST BOOST Program Grant Number JPMJBY24C6, 
JSPS KAKENHI Grant Number JP23KK0208 and 26K02996,
and MEXT Supporting Pioneering Research through AI for 1,000 Discovery challenges Program (SPReAD) Japan Grant Number JPMXP1726301424.

\small 
\DeclareUrlCommand{\Doi}{\urlstyle{sf}} 
\renewcommand{\path}[1]{\small\Doi{#1}} 
\renewcommand{\url}[1]{\href{#1}{\small\Doi{#1}}} 
\bibliographystyle{alphaurl} 
\bibliography{bibliography.bib}

\appendix

\section{Missing Proofs and Additional Corollaries}

\subsection{Proof of Theorem~\ref{thm:main}}\label{app:mainthmproof}

\begin{proof}
Let $x\in\mathcal{LC}$ be any least-core allocation.
By efficiency and the least-core constraint for $C$,
\[
x(D)=v(N)-x(C)
\le v(N)-v(C)+\varepsilon^\star
=r_m+\varepsilon^\star.
\]
Since $\phi_D^w=\sum_{j=0}^m w_jr_j$ and
$\sum_{j=0}^m w_j=1$, we obtain
\[
\phi_D^w-x(D)
\ge
\sum_{j=0}^{m-1}w_j(r_j-r_m)
-\varepsilon^\star.
\]
Writing $d_\ell:=r_{\ell-1}-r_\ell\ge0$,
average marginal monotonicity gives
\[
r_j-r_m=\sum_{\ell=j+1}^m d_\ell.
\]
Exchanging the order of summation yields
\[
\phi_D^w-x(D)
\ge
\sum_{\ell=1}^m
d_\ell\,\Pr_w(J<\ell)-\varepsilon^\star.
\]
This establishes the general gap bound.

Now suppose that $r_k>\beta>0$ for some $k<m$.
By average marginal monotonicity,
$r_j\ge r_k>\beta$ for every $j\le k$.
Since $D$ is a $\gamma$-contextual distractor,
$r_m\le-\gamma$, and hence
\[
r_j-r_m>\beta+\gamma
\qquad\text{for every }j\le k.
\]
For $j>k$, monotonicity still guarantees
$r_j-r_m\ge0$. Therefore,
\[
\begin{aligned}
\phi_D^w-x(D)
&\ge \sum_{j=0}^{m-1}w_j(r_j-r_m)
      -\varepsilon^\star\\
&\ge (\beta+\gamma)\Pr_w(J\le k)
      -\varepsilon^\star\\
&= (\beta+\gamma)
   \bigl(1-\Pr_w(J\ge k+1)\bigr)
   -\varepsilon^\star.
\end{aligned}
\]
Consequently, if
\[
\Pr_w(J\ge k+1)
\le
1-\frac{\varepsilon^\star+\delta}{\beta+\gamma},
\]
then $\phi_D^w-x(D)\ge\delta$ for every
$x\in\mathcal{LC}$.

\end{proof}

\subsection{Proof of Lemma~\ref{lem:semivalue-weight-shift}}\label{app:lemmaweightshift}
\begin{proof}
Let $d_k:=r_{k-1}-r_k\ge0$. For every
$j=0,\ldots,m$, we have
\[
r_j=r_m+\sum_{k=j+1}^m d_k.
\]
Using $\sum_{j=0}^m w_j=1$ and exchanging
the order of summation gives
\[
\begin{aligned}
\phi_D^w
&=\sum_{j=0}^m w_jr_j\\
&=r_m+\sum_{k=1}^m d_k
  \sum_{j=0}^{k-1}w_j.
\end{aligned}
\]
The least-core term is identical for both
semivalues, since the underlying game is fixed.
Subtracting the analogous expression for $w'$
therefore yields
\[
G_w-G_{w'}
=
\sum_{k=1}^m d_k
\left(
\sum_{j=0}^{k-1}w_j
-
\sum_{j=0}^{k-1}w'_j
\right).
\]
By stochastic dominance and the fact that
both distributions sum to one,
\[
\sum_{j=0}^{k-1}w_j
-
\sum_{j=0}^{k-1}w'_j
=
\sum_{j=k}^m w'_j
-
\sum_{j=k}^m w_j
\ge0.
\]
Since $d_k\ge0$, every summand is
nonnegative, proving the result.
\end{proof}

\subsection{Consequences for classical semivalues}
\label{app:addcor}

The general gap theorem gives explicit conditions for
standard semivalues by substituting their context-size
distributions. The weight-shifting lemma additionally
characterizes how the gap changes when their parameters
are varied.

\begin{corollary}[Shapley specialization]
\label{cor:shapley-gap}
Under the hypotheses of Theorem~\ref{thm:main},
the grouped Shapley value satisfies
\begin{equation}\label{eqn:shapley-gap}
\begin{aligned}
\phi_D^{\mathrm{Sh}}-x(D)
&\ge
\frac{1}{m+1}\sum_{j=0}^{m-1}(r_j-r_m)
-\varepsilon^\star\\
&=
\frac{1}{m+1}\sum_{k=1}^m
k(r_{k-1}-r_k)-\varepsilon^\star
\end{aligned}
\end{equation}
for every $x\in\mathcal{LC}$.

Moreover, suppose that $r_k-r_m\ge\eta>0$
for some $k<m$. A gap of at least $\delta>0$
is guaranteed whenever
\[
\Pr_{\mathrm{Sh}}(J\ge k+1)
=\frac{m-k}{m+1}
\le
1-\frac{\varepsilon^\star+\delta}{\eta}.
\]
In particular, if $r_k>\beta>0$ and
$r_m\le-\gamma$, it suffices that
\[
\frac{m-k}{m+1}
\le
1-\frac{\varepsilon^\star+\delta}{\beta+\gamma}.
\]
\end{corollary}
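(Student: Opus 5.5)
The plan is to specialize the proof of Theorem~\ref{thm:main} to the Shapley weights $w_j=1/(m+1)$, $j=0,\ldots,m$. These are the group Shapley weights for a single grouped player $D$ among $m+1$ players, so that $\Pr_{\mathrm{Sh}}(J=j)=1/(m+1)$.

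\textbf{First inequality.} Fix any $x\in\mathcal{LC}$. Efficiency together with the least-core constraint for $C$ gives
\[
x(D)=v(N)-x(C)\le r_m+\varepsilon^\star .
\]
Since $\phi_D^{\mathrm{Sh}}=\sum_{j=0}^m r_j/(m+1)$ and the $j=m$ term cancels against $r_m$, we get
\[
\phi_D^{\mathrm{Sh}}-x(D)\ge \frac{1}{m+1}\sum_{j=0}^{m-1}(r_j-r_m)-\varepsilon^\star .
\]
This step only needs the least-core constraint and no monotonicity.

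\textbf{The equality.} Telescope $r_j-r_m=\sum_{k=j+1}^m (r_{k-1}-r_k)$ and exchange the order of summation. The term $r_{k-1}-r_k$ appears once for each $j\in\{0,\ldots,k-1\}$, that is, $k$ times. This gives
\[
\sum_{j=0}^{m-1}(r_j-r_m)=\sum_{k=1}^m k\,(r_{k-1}-r_k).
\]
Equivalently, it follows from \eqref{eqn:general-gap} with $\Pr_{\mathrm{Sh}}(J<k)=k/(m+1)$.

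\textbf{Threshold condition.} Assume $r_k-r_m\ge\eta$ for some $k<m$. Average marginal monotonicity gives $r_j\ge r_k$ for all $j\le k$, so $r_j-r_m\ge\eta$ for $j\le k$. For $j>k$ it still gives $r_j-r_m\ge 0$. Dropping the terms with $j>k$ yields
\[
\phi_D^{\mathrm{Sh}}-x(D)\ge \eta\,\frac{k+1}{m+1}-\varepsilon^\star=\eta\Bigl(1-\frac{m-k}{m+1}\Bigr)-\varepsilon^\star ,
\]
using $\Pr_{\mathrm{Sh}}(J\ge k+1)=(m-k)/(m+1)$. If $(m-k)/(m+1)\le 1-(\varepsilon^\star+\delta)/\eta$, the right-hand side is at least $\delta$. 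The final claim follows by taking $\eta=\beta+\gamma$, since $r_k>\beta$ and $r_m\le-\gamma$ give $r_k-r_m>\beta+\gamma$.

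\textbf{Main obstacle.} The only delicate point is the index bookkeeping. One must use the group-game weights over $m+1$ sizes, $0,\ldots,m$, rather than $n$ sizes. One must also count the telescoping multiplicity correctly as $k$ and not $k+1$.
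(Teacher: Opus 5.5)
Your proposal is correct and follows the paper's proof essentially step for step. You specialize to $w_j=1/(m+1)$, bound $x(D)\le r_m+\varepsilon^\star$ via efficiency and the least-core constraint on $C$, telescope and exchange sums to get the weight $k$ on $r_{k-1}-r_k$, and use monotonicity to keep the $k+1$ terms with $j\le k$ (each at least $\eta$). The only cosmetic difference is that you re-derive the first inequality directly rather than citing Theorem~\ref{thm:main}.
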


\begin{proof}
For Shapley, $w_j=1/(m+1)$ for every $j$.
Substituting these weights into
Theorem~\ref{thm:main} gives the first inequality
in \eqref{eqn:shapley-gap}. The equality follows
from
\[
r_j-r_m
=\sum_{\ell=j+1}^m(r_{\ell-1}-r_\ell)
\]
by exchanging the order of summation.

Now suppose $r_k-r_m\ge\eta>0$.
By average marginal monotonicity,
$r_j-r_m\ge\eta$ for every $j\le k$,
while $r_j-r_m\ge0$ for $j>k$.
Consequently,
\[
\begin{aligned}
\phi_D^{\mathrm{Sh}}-x(D)
&\ge
\eta\Pr_{\mathrm{Sh}}(J\le k)
-\varepsilon^\star\\
&=
\eta\bigl(1-\Pr_{\mathrm{Sh}}(J\ge k+1)\bigr)
-\varepsilon^\star.
\end{aligned}
\]
Since Shapley samples context sizes uniformly,
\[
\Pr_{\mathrm{Sh}}(J\ge k+1)
=\frac{m-k}{m+1}.
\]
The stated sufficient condition follows.
Finally, $r_k>\beta$ and $r_m\le-\gamma$
imply $r_k-r_m>\beta+\gamma$, giving
the last condition.
\end{proof}

\begin{corollary}[Weighted Banzhaf specialization]
\label{cor:wbanz}
Under the hypotheses of Theorem~\ref{thm:main},
consider the $p$-weighted Banzhaf semivalue, with
\[
w_j(p)=\binom mj p^j(1-p)^{m-j},
\qquad 0<p<1.
\]
For every $x\in\mathcal{LC}$,
\begin{equation}\label{eqn:wbanz-gap}
\phi_D^{\mathrm{Banz}(p)}-x(D)
\ge
\sum_{j=0}^{m-1}
\binom mj p^j(1-p)^{m-j}(r_j-r_m)
-\varepsilon^\star.
\end{equation}
Furthermore:

\begin{enumerate}
\item[(a)] \textbf{Monotonicity in $p$.}
The actual signed gap
$G_{\mathrm{Banz}(p)}$ is nonincreasing in $p$.

\item[(b)] \textbf{Concentration on smaller contexts.}
If $p\le 1/(m+1)$, then
\[
G_{\mathrm{Banz}(p)}\ge G_{\mathrm{Sh}}.
\]
Thus, any sufficient condition establishing a
$\delta$-gap for Shapley also establishes a
$\delta$-gap for these weighted Banzhaf values.
\end{enumerate}
Standard Banzhaf corresponds to $p=1/2$.
\end{corollary}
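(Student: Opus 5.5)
The inequality \eqref{eqn:wbanz-gap} follows by substituting $w_j(p)=\binom mj p^j(1-p)^{m-j}$ into the first display of the proof of Theorem~\ref{thm:main}. That proof shows, for any symmetric semivalue and any $x\in\mathcal{LC}$, that $\phi_D^w-x(D)\ge\sum_{j=0}^{m-1}w_j(r_j-r_m)-\varepsilon^\star$. It uses only efficiency, the least-core constraint for $C$, and $\sum_j w_j=1$. Nothing further is needed for this part.

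For part (a), I would apply Lemma~\ref{lem:semivalue-weight-shift} with $w=w(p)$ and $w'=w(p')$ for $p<p'$. The hypothesis of the lemma is that the binomial laws are stochastically ordered in $p$:
\[
\sum_{j=k}^m w_j(p')\ge\sum_{j=k}^m w_j(p)\qquad\text{for every }k.
\]
I would prove this by coupling. Take $U_1,\dots,U_m$ i.i.d.\ uniform on $[0,1]$, and set $J_p=\#\{i:U_i\le p\}$. Then $J_p\sim\mathrm{Bin}(m,p)$ and $J_p\le J_{p'}$ pointwise, so $\Pr(J_{p'}\ge k)\ge\Pr(J_p\ge k)$. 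Alternatively, one can use the identity
\[
\frac{d}{dp}\Pr(J_p\ge k)=m\binom{m-1}{k-1}p^{k-1}(1-p)^{m-k}\ge0.
\]
The lemma then gives $G_{\mathrm{Banz}(p)}-G_{\mathrm{Banz}(p')}\ge0$, because the least-core term $\max_{x\in\mathcal{LC}}x(D)$ is common to both.

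For part (b), I would again apply the lemma, now with $w=w(p)$ and $w'$ the uniform Shapley weights $1/(m+1)$. This requires
\[
\Pr(J_p\ge k)\le\frac{m-k+1}{m+1}\qquad\text{for } k=1,\dots,m,\ p\le 1/(m+1).
\]
By part (a)'s monotonicity in $p$, it suffices to check $p=1/(m+1)$. This verification is the main obstacle.

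My first approach is Markov's inequality, which gives $\Pr(J\ge k)\le mp/k=m/((m+1)k)$. Comparing with the target, this is at most $(m-k+1)/(m+1)$ exactly when $m\le k(m-k+1)$. That holds for all $1\le k\le m$, since $k(m-k+1)-m=(k-1)(m-k)\ge0$. So Markov settles the inequality directly.

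Lemma~\ref{lem:semivalue-weight-shift} then yields $G_{\mathrm{Banz}(p)}\ge G_{\mathrm{Sh}}$. Consequently, any $\delta$-gap certified for Shapley, for instance through Corollary~\ref{cor:shapley-gap}, transfers to these weighted Banzhaf values. Finally, I would note that standard Banzhaf is the case $w_k=\binom mk/2^m$, that is, $p=1/2$.
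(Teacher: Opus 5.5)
Your proof is correct, and you handle the bound \eqref{eqn:wbanz-gap} exactly as the paper does. For (a) and (b), however, you take a different route.

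For (a), the paper does not invoke Lemma~\ref{lem:semivalue-weight-shift}. It differentiates $\phi_D^{\mathrm{Banz}(p)}=\sum_j w_j(p)r_j$ directly and obtains $m\sum_{j=0}^{m-1}\binom{m-1}{j}p^j(1-p)^{m-1-j}(r_{j+1}-r_j)\le 0$. You instead verify the lemma's stochastic-dominance hypothesis by coupling. The two are equivalent: your tail-derivative identity, inserted into $\phi_D^w=r_m+\sum_k d_k\Pr(J<k)$, reproduces the paper's derivative exactly. Your version has the merit of matching the main text's claim that the lemma itself yields monotonicity in $p$.

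For (b), the paper shows that $w_j(p)$ is nonincreasing in $j$ when $p\le 1/(m+1)$, since the ratio $\frac{m-j}{j+1}\frac{p}{1-p}$ is at most $1$. It then asserts, without proof, that a nonincreasing distribution on $\{0,\dots,m\}$ is dominated by the uniform one. You use only the mean $mp\le m/(m+1)$, Markov's inequality, and the identity $k(m-k+1)-m=(k-1)(m-k)\ge 0$. This makes your argument fully self-contained. It also makes your reduction to $p=1/(m+1)$ unnecessary, because $mp/k\le m/((m+1)k)$ already holds for every $p\le 1/(m+1)$.

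The two arguments generalize in incomparable directions. Yours shows that any context-size distribution with mean at most $m/(m+1)$, whatever its shape, has signed gap at least the Shapley gap. The paper's covers every semivalue with nonincreasing size weights, whose mean can be as large as $m/2$.
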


\begin{proof}
The bound follows from Theorem~\ref{thm:main}
by substituting the binomial size weights.

For (a), differentiating the attribution gives
\[
\frac{d}{dp}\phi_D^{\mathrm{Banz}(p)}
=
m\sum_{j=0}^{m-1}
\binom{m-1}{j}p^j(1-p)^{m-1-j}
(r_{j+1}-r_j)\le0.
\]
Since the least-core term does not depend on $p$,
the signed gap is also nonincreasing.

For (b), when $p\le1/(m+1)$, the sequence
$j\mapsto w_j(p)$ is nonincreasing because
\[
\frac{w_{j+1}(p)}{w_j(p)}
=
\frac{m-j}{j+1}\frac{p}{1-p}\le1.
\]
A nonincreasing probability distribution on
$\{0,\ldots,m\}$ is stochastically dominated
by the uniform distribution on these sizes.
Lemma~\ref{lem:semivalue-weight-shift} therefore
gives the desired comparison with Shapley.
\end{proof}

\begin{corollary}[Monotonicity for Beta-Shapley]
\label{cor:betashap-monotone}
Under the hypotheses of Theorem~\ref{thm:main},
consider Beta-Shapley with parameters $a,b>0$
and size weights
\[
w_j^{(a,b)}
=
\binom mj
\frac{\mathrm{Beta}(j+b,m-j+a)}
     {\mathrm{Beta}(b,a)},
\qquad j=0,\ldots,m.
\]
Its signed gap satisfies
\[
G_{a,b}
\ge
\sum_{j=0}^{m-1}
w_j^{(a,b)}(r_j-r_m)
-\varepsilon^\star.
\]
Moreover:
\begin{enumerate}
\item[(a)] For fixed $b$, $G_{a,b}$ is
nondecreasing in $a$.
\item[(b)] For fixed $a$, $G_{a,b}$ is
nonincreasing in $b$.
\end{enumerate}
In particular, increasing the relative weight
on larger contexts decreases the signed gap.
Shapley is recovered when $a=b=1$.
\end{corollary}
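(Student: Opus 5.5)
The lower bound on $G_{a,b}$ comes directly from Theorem~\ref{thm:main}. Since $G_{a,b}=\phi_D^{(a,b)}-\max_{x\in\mathcal{LC}}x(D)$, it suffices to apply \eqref{eqn:general-gap} to a maximizing $x$. Equivalently, I would use the first display in the proof of Theorem~\ref{thm:main}: it shows $x(D)\le r_m+\varepsilon^\star$ for every least-core $x$, and substituting the Beta-Shapley weights gives the stated bound.

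For the monotonicity claims (a) and (b), the plan is to reduce both to Lemma~\ref{lem:semivalue-weight-shift}. That lemma says that if $w'$ first-order stochastically dominates $w$ on $\{0,\dots,m\}$, then $G_w\ge G_{w'}$. So it suffices to show two things. First, for fixed $a$, the law of $J$ under $w^{(a,b)}$ is stochastically increasing in $b$. Second, for fixed $b$, it is stochastically decreasing in $a$.

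The natural representation is a beta-binomial mixture. One has $w_j^{(a,b)}=\mathbb{E}\bigl[\binom mj P^j(1-P)^{m-j}\bigr]$ with $P\sim\mathrm{Beta}(b,a)$. This follows from $\int_0^1 p^{j+b-1}(1-p)^{m-j+a-1}\,dp=\mathrm{Beta}(j+b,m-j+a)$. Hence $J\mid P\sim\mathrm{Bin}(m,P)$, and the tail probability is
\[
\Pr(J\ge k)=\mathbb{E}\bigl[g_k(P)\bigr],\qquad g_k(p):=\Pr(\mathrm{Bin}(m,p)\ge k).
\]
Each $g_k$ is nondecreasing in $p$; this is the same computation as the derivative in Corollary~\ref{cor:wbanz}(a). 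It therefore suffices that $\mathrm{Beta}(b,a)$ is stochastically increasing in $b$ and decreasing in $a$. Expectations of nondecreasing functions then preserve the order, and the tail inequalities needed in Lemma~\ref{lem:semivalue-weight-shift} follow for every $k=1,\dots,m$.

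The main obstacle is the stochastic ordering of the Beta family. I would prove it via the monotone likelihood ratio property. For $b'>b$, the density ratio
\[
\frac{p^{b'-1}(1-p)^{a-1}}{p^{b-1}(1-p)^{a-1}}=p^{b'-b}
\]
is increasing in $p$, up to normalizing constants. Likewise, for $a'>a$ the ratio is $(1-p)^{a'-a}$, which is decreasing. The likelihood ratio order implies first-order stochastic dominance, which gives both orderings.

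Alternatively, and avoiding continuous distributions, I could check that the likelihood ratio of the $w_j$ themselves is monotone in $j$. For example,
\[
\frac{w_j^{(a,b')}}{w_j^{(a,b)}}\propto\frac{\Gamma(j+b')}{\Gamma(j+b)},
\]
which is increasing in $j$, and similarly for $a$. Discrete MLR gives the dominance directly.

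Finally, I would note that $a=b=1$ gives $w_j=\binom mj\mathrm{Beta}(j+1,m-j+1)=1/(m+1)$, recovering Shapley.
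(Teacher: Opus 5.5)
Your proposal is correct and follows essentially the same route as the paper. The bound comes straight from Theorem~\ref{thm:main}, and the monotonicity in $a$ and $b$ comes from the beta-binomial representation $J\mid\Theta\sim\mathrm{Bin}(m,\Theta)$ with $\Theta\sim\mathrm{Beta}(b,a)$, combined with Lemma~\ref{lem:semivalue-weight-shift}; the only difference is that you justify the stochastic ordering of the Beta family and the monotonicity of binomial tails in $p$ (via likelihood ratios), whereas the paper simply asserts them.
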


\begin{proof}
The gap bound is an immediate application of
Theorem~\ref{thm:main}.

For the monotonicity statements, use the
beta-binomial representation
\[
\Theta_{a,b}\sim\mathrm{Beta}(b,a),
\qquad
J_{a,b}\mid\Theta_{a,b}=\theta
\sim\mathrm{Bin}(m,\theta).
\]
For fixed $b$, increasing $a$ shifts $\Theta_{a,b}$
stochastically downward and hence shifts
$J_{a,b}$ toward smaller context sizes.
For fixed $a$, increasing $b$ shifts both
distributions stochastically upward.
Both conclusions now follow from
Lemma~\ref{lem:semivalue-weight-shift}.
\end{proof}

\section{Additional Experimental Results}\label{app:addexpts}

\subsection{Epsilon evaluation for explanation methods}\label{app:epseval}
We evaluate explanation methods using a ResNet-50 classifier~\cite{resnet50} on 100 ImageNet test images.
Each image is resized to $224\times224$ and divided into an $8\times8$ grid, yielding 64 patch players.

For a fixed attribution vector $x$, we define the epsilon value as the worst coalition deficit:
$$
\epsilon(x)=\max_{S\subseteq N}\left(v(S)-x(S)\right),
$$
where $x(S)=\sum_{i\in S}x_i$. In practice, this maximum is approximated over 4,096 sampled coalitions.

For Grad-CAM~\cite{gradcam} and Grad-CAM++~\cite{gradcam++}, the explanation methods do not directly produce an epsilon value.
We therefore convert each heat map into a patch-level allocation $x$ by averaging the heat map within each patch.
Since these heat maps do not necessarily satisfy the efficiency constraint, we rescale $x$ so that
$$
\sum_i x_i = v(N).
$$
We then evaluate $\epsilon(x)$ by sampled coalitions.

For Monte Carlo Shapley, we compute $\epsilon(x)$ as a post-hoc evaluation of the resulting attribution vector.
Shapley values are estimated by Monte Carlo permutation sampling using 64 random patch orderings per image $64 \times 64 = 4,096$ sampled output changes per image).

For least-core, we use the same 64 patch players and solve the least-core optimization over 4,096 sampled coalition constraints.
We directly report the optimized $\epsilon$ returned by the least-core solver.
Thus, the least-core epsilon is not a post-hoc evaluation of a generated heat map, but the objective value obtained during the least-core optimization.

Lower epsilon indicates that the explanation better satisfies the sampled core constraints and is therefore more stable from a coalitional perspective.
Table~\ref{tab:imagenet_epsilon} summarizes the results.

\begin{table}[!ht]
    \centering
    \caption{
    Epsilon values on 100 ImageNet test images. Lower is better.
    }
    \label{tab:imagenet_epsilon}
    \begin{tabular}{lcccc}
        \toprule
         & Grad-CAM & Grad-CAM++ & Shapley-MC & Least-Core \\
        \midrule
        $\epsilon$ (mean $\pm$ std) 
        & $0.6990 \pm 0.1445$ 
        & $0.6899 \pm 0.1397$ 
        & $0.6851 \pm 0.1637$ 
        & $\mathbf{0.5822 \pm 0.1389}$ \\
        \bottomrule
    \end{tabular}
\end{table}
%%%%%%%%%%%%%%%%%%%%%%%%%

\subsection{Controlled marker experiments}

\subsubsection{Main attribution comparison}

We evaluate the contextual-distractor effect using the controlled
dog/cat/bird task in Figure~\ref{fig:marker_data_results}.  A red marker is
added to all dog images and to $30\%$ of cat images, while bird images remain
unmarked.  We train ResNet-50 and ViT-B/16 classifiers and use the cat-class
logit as the coalition value.  Images are divided into an $8\times8$ grid, and
$D$ consists of the patches overlapping the marker.

\begin{figure*}[t]
    \centering
    \includegraphics[width=\textwidth]{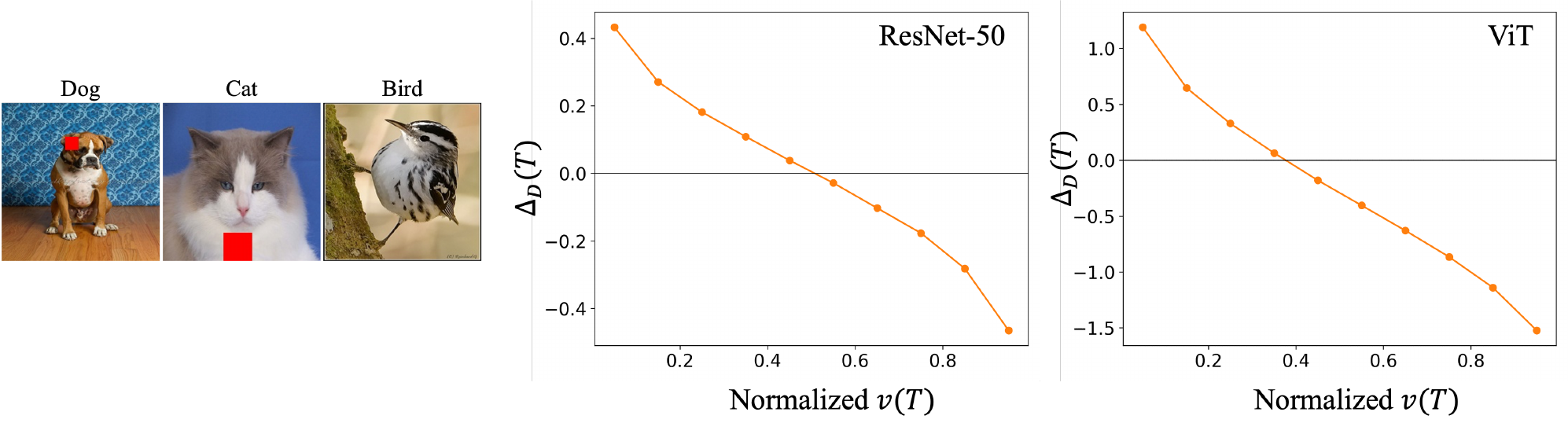}
    \caption{Controlled marker-bias experiment.  Left: representative inputs.
    The marker is correlated with the dog class and can therefore mislead cat
    predictions.  Middle and right: for both ResNet-50 and ViT-B/16, the
    marginal contribution $\Delta_D(T)=v(T\cup D)-v(T)$ decreases with the
    strength of the non-marker context.  The marker helps in weak contexts but
    suppresses the cat logit in strong contexts.}
    \label{fig:marker_data_results}
\end{figure*}

We characterize the contextual behavior of $D$ on correctly classified,
marker-bearing cat images.  For each image, we sample 2,000 coalitions
$T\subseteq N\setminus D$ by including each non-marker patch independently
with probability $0.5$, and additionally evaluate the full non-marker context.
We retain 100 images per backbone for which the mean marginal contribution
$\Delta_D(T)=v(T\cup D)-v(T)$ is positive in the bottom $20\%$ of contexts
ranked by $v(T)$ and negative in the top $20\%$.  On these selected cohorts,
the mean low- and high-context contributions are $0.351$ and $-0.373$ for
ResNet-50, and $0.917$ and $-1.331$ for ViT-B/16.  Thus, the marker helps cat
prediction when little cat evidence is visible, but suppresses it once the
surrounding evidence becomes strong.

We compare the attribution assigned to the marker by Shapley, Banzhaf, and the
least-core.  Shapley is estimated using 100 random permutations per image,
giving 100 marginal samples for each of the 64 patches.  Banzhaf likewise uses
100 marginal samples per patch.  Its context-size weights are
\[
\alpha_k(p)=\binom{|N|-1}{k}p^k(1-p)^{|N|-1-k},
\qquad k=0,\ldots,|N|-1.
\]
Larger $p$ places more weight on larger contexts.  We use $p=0.5$, which
includes each other patch independently with equal probability and recovers
the standard Banzhaf value.  For the least-core, we sample 6,400 constraints by
including each patch independently with probability $0.5$, add the empty and
grand coalitions, and solve the resulting linear program with CVXPY.  Model
evaluations use PyTorch on CUDA and the linear program is solved on CPU.  We
report the mean attribution over patches with at least $40\%$ of their area
overlapping the marker.  This yields 74 ResNet-50 images and 80 ViT-B/16
images.

\begin{figure*}[t]
    \centering
    \includegraphics[width=\textwidth]{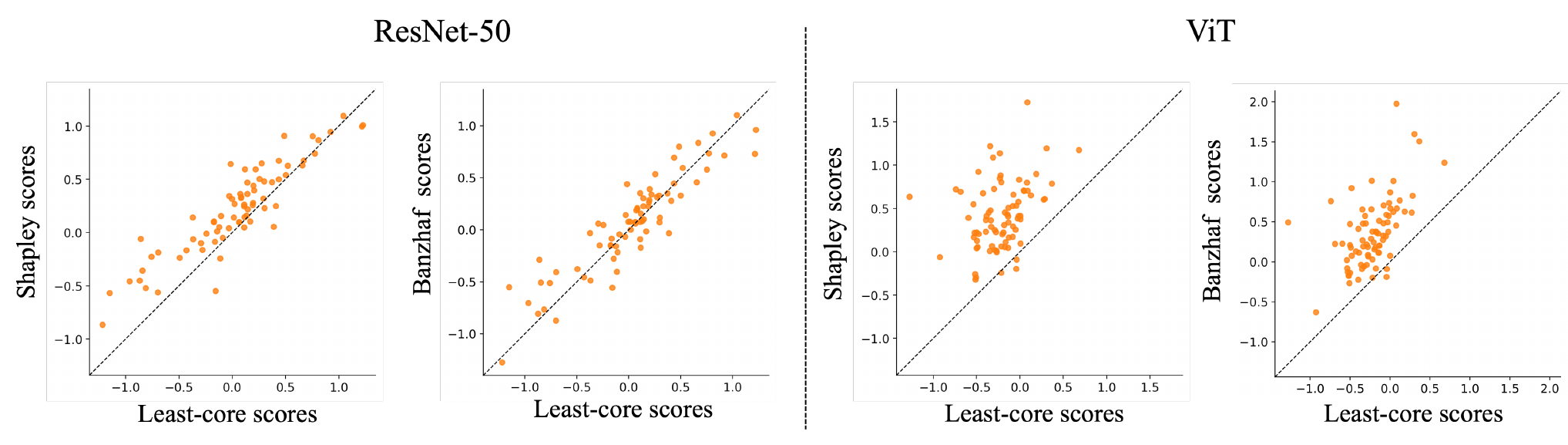}
    \caption{Per-image marker attribution for ResNet-50 (left) and ViT-B/16
    (right).  We compare the Shapley and Banzhaf ($p=0.5$) scores with the
    corresponding least-core scores.  Each point represents one retained
    image, and the dashed line indicates equal attribution; points above the
    line receive more attribution from the semivalue than from the least-core.}
    \label{fig:marker_attribution_results}
\end{figure*}

\begin{table}[t]
    \centering
    \caption{Mean marker-patch attribution. Each Gap column reports the
    corresponding semivalue minus the least-core attribution; a positive value
    indicates that the semivalue assigns more credit to the marker.}
    \label{tab:marker_semivalue_results}
    \begin{tabular}{lr|rr|rr}
        \toprule
        Model & Least-core & Shapley & Gap & Banzhaf ($p=0.5$) & Gap \\
        \midrule
        ResNet-50 & 0.041 & 0.220 & 0.179 & 0.076 & 0.035 \\
        ViT-B/16  & -0.221 & 0.419 & 0.639 & 0.355 & 0.576 \\
        \bottomrule
    \end{tabular}
\end{table}

Figure~\ref{fig:marker_attribution_results} shows the per-image comparisons,
and Table~\ref{tab:marker_semivalue_results} summarizes their means.  Both
semivalues assign more attribution to the distractor than the least-core on
average.  The separation is pronounced for ViT-B/16 and is also observed for
Shapley on ResNet-50, whereas the ResNet-50 Banzhaf gap is smaller.  These
results support the predicted averaging effect: a marker that appears useful
across sampled contexts can receive positive average credit even though it is
harmful when the surrounding cat evidence is strong.  The smaller Banzhaf gap
for ResNet-50 may partly reflect the different context-size distributions
sampled by Banzhaf and Shapley; we examine the effect of this sampling
difference in the following section.

\subsubsection{Effect of context weighting on the group semivalue gap}
\label{sec:group-semivalue-experiment}

We next examine how the distribution over context sizes affects the
semivalue--least-core gap.  This experiment treats the entire marker region
$D$ as a single player and directly estimates its group marginal
\[
    \Delta_D(T)=v(T\cup D)-v(T), \qquad T\subseteq N\setminus D.
\]
This is distinct from summing the individual semivalue attributions of the
marker patches.  For a context-size distribution $\alpha$, we estimate
\[
    \phi_\alpha^{\mathrm{group}}(D)
    =\mathbb{E}_{T\sim\mu_\alpha}[\Delta_D(T)]
\]
and compare it with $x^*(D)=\sum_{i\in D}x_i^*$ from the least-core solution of
the original patch game.  We report the signed gap
\[
    G_\alpha:=\phi_\alpha^{\mathrm{group}}(D)-x^*(D),
\]
so that $G_\alpha>0$ means that the group semivalue assigns more credit to the
marker than the least-core does.

\paragraph{Context-weighting families.}
We use the same 100 marker-bearing cat images per backbone as in the preceding
experiment and vary the context-size distribution while leaving the model,
masking procedure, baseline, and least-core allocation fixed.  For the
$p$-weighted Banzhaf family, each patch in $N\setminus D$ is included
independently with probability
\[
    p\in\{0.2,0.35,0.5,0.65,0.8\}.
\]
Thus, larger $p$ shifts probability mass toward larger contexts, while
$p=0.5$ gives the standard Banzhaf value.  We also consider the Beta-Shapley
family with
\[
    (a,b)\in\{(4,1),(2,1),(1,1),(1,2),(1,4)\},
\]
whose expected normalized context size is $b/(a+b)$; $(a,b)=(1,1)$ recovers
the group Shapley value.  As a reference baseline, we also estimate the group
Shapley value independently rather than identifying it only with the
$(a,b)=(1,1)$ Beta-Shapley setting.  Each group semivalue, including this
group Shapley baseline, is estimated from 3,200 marginal samples per image.
Since one marginal sample evaluates both $T$ and $T\cup D$, this matches the
nominal 6,400 coalition-evaluation budget used for the least-core constraints.

\begin{figure*}[t]
    \centering
    \includegraphics[width=\textwidth]{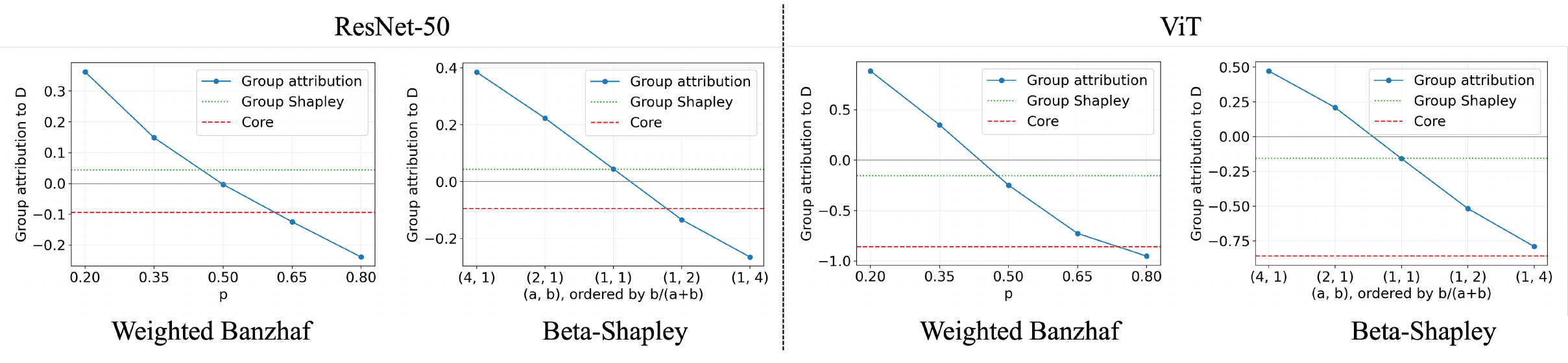}
    \caption{Effect of context-size weighting on the group attribution to the
    marker region $D$.  The left two panels show ResNet-50 and the right two
    show ViT-B/16.  Within each pair, the first panel varies $p$ for the
    weighted-Banzhaf family, and the second shifts the Beta-Shapley parameters
    from small-context weighting $(4,1)$ to large-context weighting $(1,4)$.
    The solid blue curves show the group semivalue obtained at each parameter
    setting.  The dotted green and dashed red lines show the independently
    estimated group Shapley baseline and the least-core attribution,
    respectively.}
    \label{fig:group-semivalue-results}
\end{figure*}

\paragraph{Results.}
As shown in Figure~\ref{fig:group-semivalue-results}, increasing the weight on
large contexts consistently lowers the signed gap.  For ResNet-50, the mean
weighted-Banzhaf gap decreases from $0.457$ at $p=0.2$ to $0.243$, $0.091$,
$-0.030$, and $-0.145$ as $p$ increases.  For ViT-B/16, it decreases from
$1.742$ to $1.208$, $0.610$, $0.132$, and $-0.093$.  The Beta-Shapley sweep
shows the same behavior: moving from $(a,b)=(4,1)$ to $(1,4)$ changes the gap
from $0.479$ to $-0.171$ for ResNet-50 and from $1.331$ to $0.069$ for
ViT-B/16.  The independently estimated group Shapley gaps are $0.138$ and
$0.703$, respectively, consistent with the corresponding $(1,1)$
Beta-Shapley estimates.

These results agree with the mechanism in Lemma~\ref{lem:semivalue-weight-shift}, i.e., placing
mass toward larger contexts increases the chance of evaluating $D$ where it
is harmful.  Its group attribution therefore decreases toward the
least-core attribution, reducing the positive over-credit gap.  At the
largest weights the signed gap can cross zero, so the result should not be
read as saying that the absolute discrepancy must decrease indefinitely;
rather, increasing large-context weight systematically removes the positive
semivalue over-credit predicted for contextual distractors.

\subsection{Contextual Distractors in Natural Images}
\label{app:natural-contextual-distractors}

We provide additional evaluations for the experiments described in Section~\ref{sec:expts} for the Salient ImageNet dataset.

\paragraph{Quantitative evaluation.}
Table~\ref{tab:natural-shapley-core} shows a clear contrast between the two
region types.  
On contextual distractors, Shapley assigns systematically more
credit than Core: the mean $A_D$ gap is 1.076 for ResNet-50 and 0.532 for
ViT-B/16, 
with positive gaps in $82.5\%$ and $80.6\%$ of candidates,
respectively. 
The conclusion is unchanged under mask weighting.  
In contrast, the random-SAM controls show the opposite tendency:
the Core attribution is larger than the Shapley attribution on average for
both architectures and for both $A_D$ and $A_{\mathrm{mask}}$.
Accordingly, the mean Shapley--Core gaps are negative for the controls,
whereas they are positive for the contextual distractors.

\begin{table}[t]
    \centering
    \caption{Shapley and sampled egalitarian least-core attribution for
    contextual distractors and prediction-active random-SAM controls.  Gap is
    Shapley minus Core, and S $>$ C is the fraction of regions for which the
    Shapley attribution exceeds the Core attribution.}
    \label{tab:natural-shapley-core}
    \begin{tabular}{lllrrrr}
        \toprule
        Model & Region & Outcome & Shapley & Core & Gap & S $>$ C \\
        \midrule
        ResNet-50 & Contextual distractor & $A_D$
          & 2.338 & 1.262 & 1.076 & 82.5\% \\
        & & $A_{\mathrm{mask}}$
          & 1.528 & 0.873 & 0.655 & 78.8\% \\
        \cmidrule(lr){2-7}
        & Random-SAM control & $A_D$
          & 0.614 & 0.838 & $-0.224$ & 33.1\% \\
        & & $A_{\mathrm{mask}}$
          & 0.434 & 0.579 & $-0.144$ & 33.1\% \\
        \addlinespace
        \midrule
        ViT-B/16 & Contextual distractor & $A_D$
          & 2.177 & 1.645 & 0.532 & 80.6\% \\
        & & $A_{\mathrm{mask}}$
          & 1.435 & 1.112 & 0.323 & 76.9\% \\
        \cmidrule(lr){2-7}
        & Random-SAM control & $A_D$
          & 1.810 & 1.959 & $-0.149$ & 40.5\% \\
        & & $A_{\mathrm{mask}}$
          & 1.188 & 1.295 & $-0.107$ & 39.5\% \\
        \bottomrule
    \end{tabular}
\end{table}

These results indicate that the positive Shapley--Core gap is enriched among
regions exhibiting context-dependent sign reversal and is not a generic
property of localized regions that affect the prediction.  
The same pattern appears for both architectures and for both region summaries.

\subsection{Details of the Natural-Image Experiments}
\label{app:natural-image-details}

\subsubsection{Data, preprocessing, and cooperative game}

We use Salient ImageNet class--feature pairs and their corresponding ImageNet
training images.
Both RGB images and Salient ImageNet masks are resized to 256 pixels and
center-cropped to $224\times224$ pixels.
Each image is divided into a $7\times7$ grid, yielding 49 players.
For a coalition $S\subseteq N$, patches outside $S$ are replaced by the fixed
ImageNet RGB mean $(0.485,0.456,0.406)$ before model normalization.
The game value is defined as the target-class logit,
\begin{align}
v(S)=f_c(x_S).
\end{align}
We evaluate torchvision ResNet-50 with \texttt{IMAGENET1K\_V2} weights and
ViT-B/16 with \texttt{IMAGENET1K\_V1} weights.

\subsubsection{Contextual-distractor selection}
\label{app:natural-selection}

\paragraph{Spurious-feature screening.}
Each Salient ImageNet class--feature pair is annotated by five MTurk workers~\cite{salient_imagenet}.
We retain features that a majority of workers judge not to be part of the
main object.
The workers' free-text descriptions are used as textual prompts for the
localization stage described below.

\paragraph{Initial sign-reversal search.}
The Salient ImageNet soft mask is binarized at its image-specific 90th
percentile, and masks covering more than $25\%$ of the image are excluded.
A patch is assigned to a preliminary distractor region $D$ when at least
$20\%$ of its area overlaps the binary mask.

For each candidate region, we sample contexts
$T\subseteq N\setminus D$ by first sampling the context size uniformly and
then sampling a subset of that size uniformly.
We evaluate the region marginal contribution
\begin{align}
\Delta_D(T)
=
v(T\cup D)-v(T).
\end{align}
Let $\tau$ denote the 70th percentile of the sampled $v(T)$ values, and define
\begin{align}
\mathcal{T}_{\mathrm{weak}}
&=
\{T:v(T)<\tau\},
\\
\mathcal{T}_{\mathrm{strong}}
&=
\{T:v(T)\geq\tau\}.
\end{align}
We retain candidates whose mean marginal contribution is positive in weak
contexts and negative in strong contexts,
\begin{align}
\mathbb{E}
\left[
\Delta_D(T)
\mid
T\in\mathcal{T}_{\mathrm{weak}}
\right]
&>0,
\\
\mathbb{E}
\left[
\Delta_D(T)
\mid
T\in\mathcal{T}_{\mathrm{strong}}
\right]
&<0,
\end{align}
while also requiring the corresponding sign to occur in a majority of sampled
contexts in each group.

\paragraph{Independent localization and re-evaluation.}
To obtain an independent localization of the candidate feature, we use
Grounding DINO~\cite{grounding_dino} with the worker-provided textual prompt
and refine the resulting regions using SAM~\cite{sam}.
We retain regions that sufficiently agree with the Salient ImageNet mask and
exclude excessively large masks.
The distractor region $D$ is then redefined from the resulting agreement mask,
and the context-dependent sign-reversal test is repeated.

The same localization procedure is used for both architectures.
For each model, the sign-reversal criterion is evaluated using that model's
own coalition values, allowing the resulting contextual-distractor sets to
differ across architectures.

\subsubsection{Shapley and sampled egalitarian least-core computation}
\label{app:natural-core-computation}

Shapley values are approximated using random player permutations.
For player $i$,
\begin{align}
\widehat{\phi}_i
=
\frac{1}{R}
\sum_{r=1}^{R}
\left[
v(P_i^{(r)}\cup\{i\})
-
v(P_i^{(r)})
\right],
\end{align}
where $P_i^{(r)}$ contains the players preceding $i$ in permutation $r$.

For the least-core approximation, we sample coalition constraints by including
each player independently with probability $0.5$ and explicitly include the
empty and grand coalitions.
We shift the game as
\begin{align}
v'(S)
=
v(S)-v(\emptyset),
\end{align}
and solve
\begin{align}
\min_{x,\varepsilon}
\quad&
\varepsilon
\\
\mathrm{s.t.}\quad
&
\sum_{i\in N}x_i
=
v'(N),
\\
&
\sum_{i\in S}x_i+\varepsilon
\geq
v'(S),
\qquad
S\in\mathcal{C}_{\mathrm{sample}},
\\
&
\varepsilon\geq0.
\end{align}

After obtaining $\varepsilon^\star$, we select an egalitarian least-core
allocation by minimizing
\begin{align}
\|x\|_2^2
\end{align}
under the same coalition and efficiency constraints with
$\varepsilon$ fixed to $\varepsilon^\star$ up to numerical tolerance.
We do not impose nonnegativity on the allocation.
Here and in Section~\ref{sec:natural-contextual-distractors},
\emph{Core} refers to this sampled egalitarian least-core allocation.

\subsubsection{Prediction-active random-SAM controls}
\label{app:random-sam-control}

We construct a separate control cohort for each architecture.
Candidate images are drawn from the same ImageNet pool after excluding the
selected contextual-distractor images.
Randomly placed boxes, with sizes matched to the contextual-distractor region
distribution, are provided to SAM to obtain control regions.
Neither Salient ImageNet masks nor Grounding DINO are used to determine the
control location.

To avoid prediction-irrelevant regions, we retain only controls whose mean
absolute marginal contribution exceeds the 10th percentile of the corresponding
contextual-distractor distribution, yielding thresholds of 0.60 logit units
for ResNet-50 and 0.90 for ViT-B/16.
Regions satisfying the contextual sign-reversal criterion are excluded.
Neither Shapley nor Core attribution is used during control selection.
The selected controls are evaluated using the same Shapley and sampled
least-core procedures as the contextual distractors.

\subsubsection{Scope of the comparison}

The natural contextual-distractor cohort is selected using human annotations,
independent localization, and a context-dependent sign-reversal criterion.
The reported attribution results therefore characterize this selected
population rather than ImageNet images in general.
Furthermore, because the contextual-distractor and random-SAM cohorts are
constructed using different selection criteria, their comparison should be
interpreted as an empirical comparison between the resulting cohorts rather
than as a causal effect of sign reversal alone.

\end{document}